\title[]{Thoughts on the Consistency between Ricci Flow and Neural Network Behavior}
\begin{document}

\maketitle

\begin{abstract}%
  The Ricci flow is a partial differential equation for evolving the metric in a Riemannian manifold to make it more regular. On the other hand, neural networks seem to have similar geometric behavior for specific tasks. In this paper, we construct the linearly nearly Euclidean manifold as a background to observe the evolution of Ricci flow and the training of neural networks. Under the Ricci-DeTurck flow, we prove the dynamical stability and convergence of the linearly nearly Euclidean metric for an $L^2$-Norm perturbation. In practice, from the information geometry and mirror descent points of view, we give the steepest descent gradient flow for neural networks on the linearly nearly Euclidean manifold. During the training process of the neural network, we observe that its metric will also regularly converge to the linearly nearly Euclidean metric, which is consistent with the convergent behavior of linearly nearly Euclidean metrics under the Ricci-DeTurck flow.
\end{abstract}

\begin{keywords}%
  Geometry, Ricci flow, Neural network, Optimization, Information geometry
\end{keywords}

\section{Introduction}

On the one hand, the geometrical structure of neural networks trained on the given dataset will gradually become regular. On the other hand, the Ricci flow is a process of ``surgery" on a manifold, which will make the manifold also become regular. Both of them seem to have the same evolutionary goal. This paper mainly focuses on their performance in response to perturbations. By claiming the perturbation in a neural network, we mean that the gradient direction deviates from the steepest descent direction due to the random selection of data and the size of learning step.

In general, a neural network is embedded in the Euclidean space. Since the Euclidean space is flat and fixed, perturbations will all act on the parameter update and become difficult to eliminate. From the duality theory point of view~\citep{amari2000methods,amari2016information}, we can embed the neural network on the Riemannian manifold. If the perturbation is considered to be all acting on the manifold, then the parameter update will not be affected by the perturbation~\citep{martens2020new}. This is an excellent way to deal with perturbations in neural networks. Naturally, we consider embedding a Riemannian manifold close to the Euclidean space for a neural network, i.e., the linearly nearly Euclidean manifold.

In general relativity~\citep{wald2010general}, a complete Riemannian manifold $(\mathcal{M},g)$ endowed with 
a linearly nearly flat spacetime metric $g_{ij}$ is considered for linearized gravity to solve the Newtonian limit. The form of this metric is $g_{ij}= \eta_{ij}+ \gamma_{ij}$, where $\eta_{ij}$ represents a flat Minkowski metric whose background is special relativity and $\gamma_{ij}$ is small from $\eta_{ij}$. An adequate definition of ``smallness" in this context is that the components of $\gamma_{ij}$ are much smaller than $1$ in some global inertial coordinate system of $\eta_{ij}$. Now, let us step out of the physical world and give a similar metric $g_{ij}= \delta_{ij}+ \gamma_{ij}$ in Riemannian $n$-manifold $(\mathcal{M}^n,g)$, i.e. the linearly nearly Euclidean metric, where $\delta_{ij}$ represents a flat Euclidean metric and $\gamma_{ij}$ is small from $\delta_{ij}$.

For the Riemannian $n$-dimensional manifold $(\mathcal{M}^n,g)$ that is isometric to the Euclidean $n$-dimensional space $(\mathbb{R}^n,\delta)$, Schn{\"u}rer et al.~\citep{schnurer2007stability} have showed the stability of Euclidean space under the Ricci flow for a small $C^0$ perturbation. Koch et al.~\citep{koch2012geometric} have given the stability of the Euclidean space along with the Ricci flow in the $L^{\infty}$-Norm. Moreover, for the decay of the  $L^{\infty}$-Norm on Euclidean space, Appleton~\citep{appleton2018scalar} has given the proof of a different method. Considering the stability of integrable and closed Ricci-flat metrics, Sesum~\citep{sesum2006linear} has proved that the convergence rate is exponential because the spectrum of the Lichnerowicz operator is discrete. Furthermore, Deruelle et al.~\citep{deruelle2021stability} have proved that an asymptotically locally Euclidean Ricci-flat metric is dynamically stable under the Ricci flow, for an $L^2 \cap L^{\infty}$ perturbation on non-flat and non-compact Ricci-flat manifolds.

In this paper, we consider a complete Riemannian $n$-dimensional manifold $(\mathcal{M}^n,g)$ endowed with linearly nearly Euclidean metrics $g(t)=\delta+\gamma(t)$. Our main contributions are as follows:

\begin{enumerate}
	\item We prove the stability of linearly nearly Euclidean manifolds under the Ricci-DeTurck flow for an $L^2$-Norm perturbation if initial metrics are integrable and linearly stable, i.e., has a manifold structure of finite dimension. We mean that any Ricci-DeTurck flow which starts from near $g$ exists for all time and converges to a linearly nearly Euclidean metric near $g$. Note that we use the Einstein summation convention and denote generic constants by $C$.
	\item For a neural network, we define and construct linearly nearly Euclidean manifolds based on the information geometry and mirror descent algorithm. Based on a symmetrized convex function, we obtain the linearly nearly Euclidean divergence which is used to calculate the steepest descent gradient in linearly nearly Euclidean manifolds.
	\item On linearly nearly Euclidean manifolds, we yield the weakly and strongly approximated gradient flow of neural networks, respectively.
	\item Experimentally, when we use the weakly approximated steepest descent gradient flow to learn several neural networks on classification tasks, we observe the evolution of its metric is consistent with the convergent behavior of linearly nearly Euclidean metrics under Ricci flow.
\end{enumerate} 

\section{Ricci Flow}
\label{chapter2}

Let us introduce a partial differential equation, the Ricci flow, without explanation. The concept of the Ricci flow first published by Hamilton~\citep{hamilton1982three} on the manifold $\mathcal{M}$ of a time-dependent Riemannian metric $g(t)$ with the initial metric $g_0$:
\begin{equation}
\begin{aligned}
\frac{\partial}{\partial t} g(t) &=-2 \operatorname{Ric}(g(t)) \\
g(0) &=g_{0}
\end{aligned}
\label{ricci}
\end{equation}
where $\operatorname{Ric}$ denotes the Ricci curvature tensor whose definition can be found in Appendix~\ref{app1}.

The purpose of the Ricci flow is to prove Thurston's Geometrization Conjecture and Poincar\'e Conjecture because the Ricci flow is like a surgical scalpel, trimming irregular manifolds into regular manifolds to facilitate observation and discussion~\citep{sheridan2006hamilton}.

In general, in order to possess good geometric and topological properties, we expect the metric to become converge and round with the help of the Ricci flow. ``become round" means that the solution will not shrink to a point but converge to a constant circle. However, in most cases, we do not even know the convergence of the solution and whether the solution will develop a singularity. Next, we will discuss these issues for brevity.

\subsection{Short Time Existence}

To show that there exists a unique solution for a short time, we must check if the system of the Ricci flow is strongly parabolic.

\begin{theorem}
	\label{thm1}
	When $u: \mathcal{M}\times[0, T) \rightarrow \mathcal{E}$ is a time-dependent section of the vector bundle $\mathcal{E}$ where $\mathcal{M}$ is some Riemannian manifold, if the system of the Ricci flow is strongly parabolic at $u_0$ then there exists a solution on some time interval $[0, T)$, and the solution is unique for as long as it exists.
\end{theorem}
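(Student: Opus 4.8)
The plan is to reduce the statement to the standard existence-and-uniqueness theory for quasilinear strongly parabolic systems of sections of a vector bundle. I would write the flow abstractly as $\partial_t u = F(u)$, where $F$ is a (possibly nonlinear) second-order differential operator, and first make the hypothesis precise by linearizing $F$ at $u_0$ to obtain $DF_{u_0}$ and computing its principal symbol $\hat{\sigma}[DF_{u_0}](\xi):\mathcal{E}_x \to \mathcal{E}_x$ for each nonzero covector $\xi$. Strong parabolicity at $u_0$ means exactly that this symbol is uniformly positive definite, i.e. there is $\lambda>0$ with $\langle \hat{\sigma}[DF_{u_0}](\xi)\,v,\,v\rangle \ge \lambda\,|\xi|^2\,|v|^2$ for all $\xi\neq 0$ and all $v\in\mathcal{E}_x$. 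This positivity is the property I will exploit at every subsequent step.

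Next comes the linear theory. I would fix parabolic H\"older spaces $C^{2+\alpha,\,1+\alpha/2}$ and $C^{\alpha,\,\alpha/2}$ and show that the linearized operator $\partial_t - DF_{u_0}$, together with the initial condition, is a Banach-space isomorphism. The positive-definite symbol furnishes uniform ellipticity, hence interior and boundary Schauder a priori estimates; combining these with the method of continuity, deforming $DF_{u_0}$ to the standard heat operator while preserving symbol positivity, yields invertibility. Equivalently, one may argue that strong parabolicity makes $DF_{u_0}$ sectorial and thus the generator of an analytic semigroup, from which the linear solution map follows directly.

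For the nonlinear problem, since the linearization is an isomorphism at $u_0$, I would apply the inverse (implicit) function theorem in these Banach spaces to the map $u \mapsto (\partial_t u - F(u),\, u|_{t=0})$, producing a solution with initial data $u_0$ on some short interval $[0,T)$; a Picard iteration or contraction-mapping argument on a small time interval gives the same conclusion. Uniqueness then follows by an energy argument: if $u$ and $v$ are two solutions sharing the initial data, their difference $w=u-v$ satisfies a linear strongly parabolic equation with zero initial data, and testing against $w$ while using the positive symbol produces a Gr\"onwall inequality that forces $w\equiv 0$.

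I expect the main obstacle to be the linear theory, namely establishing that the strongly parabolic linear operator is an isomorphism between the correct function spaces; this is the step that genuinely uses the positivity of the principal symbol, via the full strength of the Schauder estimates (or sectoriality of the generator) and the method of continuity, while the nonlinear existence and the uniqueness are comparatively formal consequences. I would also emphasize a conceptual caveat: although this theorem is precisely the abstract tool one wants, the \emph{Ricci flow itself is only weakly parabolic} because of its invariance under the diffeomorphism group, so the hypothesis of strong parabolicity is not satisfied directly by \eqref{ricci} and must first be arranged by DeTurck's trick before Theorem~\ref{thm1} can be invoked.
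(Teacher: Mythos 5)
Your outline is correct and is essentially the standard argument: the paper itself gives no proof of Theorem~\ref{thm1}, deferring entirely to the quasilinear parabolic theory of Ladyzhenskaya--Solonnikov--Uraltseva, which proceeds exactly as you describe (Schauder/semigroup theory for the linearized strongly parabolic operator, then a fixed-point or inverse-function-theorem argument for the nonlinear short-time problem, and an energy/Gr\"onwall estimate for uniqueness). Your closing caveat that the Ricci flow is only weakly parabolic and requires the DeTurck trick before this theorem applies is also precisely the point the paper makes immediately after the statement.
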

\begin{proof}
	The proofs can be found in \citep{ladyzhenskaia1988linear}.
\end{proof}

\begin{definition}
	\label{def1}
	The Ricci flow is strongly parabolic if there exists $\delta > 0$ such that for all covectors $\varphi \neq 0$ and all symmetric $h_{ij}=\frac{\partial g_{ij}(t)}{\partial t} \neq 0$, the principal symbol of $-2\operatorname{Ric}$ satisfies
	\[
	[-2\operatorname{Ric}](\varphi)(h)_{ij} h^{ij}=g^{p q}\left(\varphi_{p} \varphi_{q} h_{i j}+\varphi_{i} \varphi_{j} h_{p q}-\varphi_{q} \varphi_{i} h_{j p}-\varphi_{q} \varphi_{j} h_{i p}\right) h^{i j}>\delta \varphi_{k} \varphi^{k} h_{r s} h^{r s}.
	\]
\end{definition}
Since the inequality cannot always be satisfied, the Ricci flow is not strongly parabolic, which makes us unable to prove the existence of the solution based on Theorem~\ref{thm1}.

It is possible to understand which parts have an impact on its non-parabolic by the linearization of the Ricci curvature tensor.
\begin{lemma}
	\label{le1}
	The linearization of $-2\operatorname{Ric}$ can be rewritten as
	\begin{equation}
	\begin{aligned}
	&D[-2 \operatorname{Ric}](h)_{i j}=g^{p q} \nabla_{p} \nabla_{q} h_{i j}+\nabla_{i} V_{j}+\nabla_{j} V_{i}+O(h_{ij}) \\
	&\operatorname{where} \;\;\;\;V_{i}=g^{p q}\left(\frac{1}{2} \nabla_{i} h_{p q}-\nabla_{q} h_{p i}\right).
	\end{aligned}
	\end{equation}
\end{lemma}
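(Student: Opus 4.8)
The plan is to compute the first variation of $\operatorname{Ric}$ directly. Set $g(s)=g+sh$ with $h_{ij}=\frac{\partial}{\partial s}g_{ij}\big|_{s=0}$ symmetric, and read off the principal (second-order in $h$) part together with the remaining curvature terms. The starting point is the observation that, although the Christoffel symbols $\Gamma^k_{ij}$ are not tensorial, their variation is a genuine tensor; differentiating in $s$ and re-expressing ordinary derivatives through $\nabla$ gives $D\Gamma^k_{ij}(h)=\frac{1}{2} g^{kl}(\nabla_i h_{jl}+\nabla_j h_{il}-\nabla_l h_{ij})$. I would establish this identity first, as everything downstream is assembled from it.

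Next I would use the variation formula for the Ricci curvature, $D\operatorname{Ric}(h)_{ij}=\nabla_k D\Gamma^k_{ij}(h)-\nabla_i D\Gamma^k_{kj}(h)$, which follows from writing $R_{ij}$ as a contracted divergence of the connection. The trace term collapses cleanly: with $H=g^{kl}h_{kl}$, the symmetry $g^{kl}\nabla_k h_{jl}=g^{kl}\nabla_l h_{jk}$ gives $D\Gamma^k_{kj}(h)=\frac{1}{2}\nabla_j H$, hence $\nabla_i D\Gamma^k_{kj}(h)=\frac{1}{2}\nabla_i\nabla_j H$. Substituting $D\Gamma$ and multiplying by $-2$ then expresses $-2D\operatorname{Ric}(h)_{ij}$ as a sum of second-covariant-derivative terms: the rough Laplacian $g^{pq}\nabla_p\nabla_q h_{ij}$, two mixed terms $-g^{pq}\nabla_p\nabla_i h_{jq}$ and $-g^{pq}\nabla_p\nabla_j h_{iq}$, and the Hessian $+\nabla_i\nabla_j H$.

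The final step is to recognize these remaining terms as $\nabla_i V_j+\nabla_j V_i$ for the symmetric field $V_i=g^{pq}(\frac{1}{2}\nabla_i h_{pq}-\nabla_q h_{pi})$. Expanding $\nabla_i V_j+\nabla_j V_i$ reproduces $\nabla_i\nabla_j H$ together with $-g^{pq}(\nabla_i\nabla_q h_{pj}+\nabla_j\nabla_q h_{pi})$, which coincide with the mixed terms above only after the two covariant derivatives are interchanged (for instance, replacing $\nabla_p\nabla_i h_{jq}$ by $\nabla_i\nabla_p h_{jq}$ under the $g^{pq}$ contraction and using the symmetry of $h$). That interchange is where the substantive work sits: each commutator $[\nabla_p,\nabla_i]$ applied to $h$ generates Riemann and Ricci curvature contracted algebraically against $h$, carrying no derivatives on $h$. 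I expect this commutator bookkeeping to be the main obstacle, but since the claim is stated only up to an $O(h_{ij})$ remainder, precisely these curvature-times-$h$ terms are the zeroth-order piece absorbed into $O(h_{ij})$. Collecting the surviving second-order terms then yields the asserted identity.
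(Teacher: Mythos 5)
Your proposal is correct and follows essentially the same route as the paper: both arguments reduce to commuting the covariant derivatives in the mixed second-derivative terms so that they assemble into $\nabla_i V_j+\nabla_j V_i$, with the resulting $\operatorname{Rm}*h$ commutator terms absorbed into the $O(h_{ij})$ remainder. The only difference is that you derive the first variation of $\operatorname{Ric}$ from scratch via $D\Gamma^k_{ij}(h)$ and the Palatini identity, whereas the paper simply quotes that linearization formula as its starting point; your version is therefore a bit more self-contained but not a different proof.
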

\begin{proof}
	The proofs can be found in Appendix~\ref{app21}.
\end{proof}

The term $O(h_{ij})$ will have no contribution to the principal symbol of $-2 \operatorname{Ric}$, so ignoring it will not affect our discussion of this problem. By carefully observing the above equation, one finds that the impact on the non-parabolic of the Ricci flow comes from the terms in $V$, not the term $g^{p q} \nabla_{p} \nabla_{q} h_{i j}$. The solution is followed by the DeTurck Trick~\citep{deturck1983deforming} that has a time-dependent reparameterization of the manifold:
\begin{equation}
\begin{aligned}
\frac{\partial}{\partial t}\bar{g}(t)&=-2 \operatorname{Ric}(\bar{g}(t))- \mathcal{L}_{\frac{\partial \varphi(t)}{\partial t}} \bar{g}(t) \\
\bar{g}(0) &=\bar{g}_0 + d,
\end{aligned}
\label{deturck}
\end{equation}
where $d$ is a symmetric (0,2)-tensor on $\mathcal{M}$. See Appendix~\ref{app22} for details. By choosing $\frac{\partial \varphi(t)}{\partial t}$ to cancel the effort of the terms in $V$, the reparameterized Ricci flow is strongly parabolic. Thus, one can say that the Ricci-DeTurck flow has a unique solution, the pullback metric, for a short time.

\subsection{Curvature Explosion at Singularity}

In this subsection, we will present the behavior of the Ricci flow in finite time and show that the evolution of the curvature is close to divergence. The core demonstration is followed with Theorem~\ref{thm4}, which requires some other proof as a foreshadowing.

\begin{theorem}
	\label{thm2}
	Given a smooth Riemannian metric $g_0$ on a closed manifold $\mathcal{M}$, there exists a maximal time interval $[0, T)$ such that a solution $g(t)$ of the Ricci flow, with $g(0) = g_0$, exists and is smooth on $[0, T)$, and this solution is unique.
\end{theorem}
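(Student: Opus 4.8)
The plan is to build the maximal solution by patching together short-time solutions and then to show that this patching is unambiguous by means of a uniqueness argument. First I would invoke the short-time existence already secured through the DeTurck trick: since the Ricci-DeTurck flow \eqref{deturck} is strongly parabolic, Theorem~\ref{thm1} yields a unique smooth solution $\bar{g}(t)$ on some interval $[0,\varepsilon)$, and pulling back by the time-dependent diffeomorphisms $\varphi(t)$ generated by $V$ recovers a smooth solution $g(t)$ of the Ricci flow \eqref{ricci} with $g(0)=g_0$. This already guarantees existence on at least a small interval.

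Next I would define $T = \sup\{\tau > 0 : \text{the Ricci flow admits a smooth solution on } [0,\tau) \text{ with } g(0)=g_0\}$ and argue that a single smooth solution is defined on all of $[0,T)$. The key step here is continuation: given a smooth solution on $[0,t_0)$ with $t_0 < T$ that extends smoothly up to and including the endpoint, one restarts the flow from the metric $g(t_0)$ via the short-time result, obtaining a solution on $[0,t_0+\delta)$ for some $\delta>0$. Iterating this and appealing to uniqueness on the overlaps shows that the local pieces agree and assemble into one smooth solution on the half-open interval $[0,T)$, which is maximal by the very definition of $T$ (and may be infinite).

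The main obstacle, and indeed the heart of the theorem, is uniqueness, precisely because the Ricci flow is only weakly parabolic: its principal symbol degenerates along directions tangent to the orbit of the diffeomorphism group, which is exactly the defect exposed by Definition~\ref{def1} and isolated in the terms $V$ of Lemma~\ref{le1}. I would resolve this through the standard gauge-fixing scheme. Given two smooth solutions $g_1(t)$ and $g_2(t)$ of \eqref{ricci} on a common interval with the same initial data, I would solve the harmonic map heat flow for maps from $(\mathcal{M},g_i(t))$ into the fixed background $(\mathcal{M},g_0)$; being strictly parabolic, this flow admits unique solutions starting from the identity. Composing each $g_i(t)$ with the resulting diffeomorphisms converts both into solutions of the strongly parabolic Ricci-DeTurck flow sharing the same initial metric, so Theorem~\ref{thm1} forces the two DeTurck solutions to coincide. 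Tracing the construction back through the uniquely determined diffeomorphisms then yields $g_1(t)=g_2(t)$, which establishes uniqueness and thereby makes the maximal solution on $[0,T)$ well defined.
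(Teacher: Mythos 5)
Your argument is correct and is essentially the one the paper defers to: the paper proves Theorem~\ref{thm2} only by citation to \citep{sheridan2006hamilton}, and that reference follows exactly your route --- short-time existence via the DeTurck trick and Theorem~\ref{thm1}, a maximal interval assembled by patching local solutions, and uniqueness restored by the harmonic map heat flow gauge-fixing that converts both Ricci flow solutions into the same strongly parabolic Ricci--DeTurck flow. No substantive gap; at most, the continuation step is more cleanly phrased as taking the union over all $\tau<T$ of solutions on $[0,\tau)$, which agree on overlaps by the uniqueness you establish.
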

\begin{proof}
	The proofs can be found in \citep{sheridan2006hamilton}.
\end{proof}

\begin{theorem}
	\label{thm3}
	Let $\mathcal{M}$ be a closed manifold and $g(t)$ a smooth time-dependent metric on $\mathcal{M}$, defined for $t \in [0, T)$. If there exists a constant $C < \infty$ for all $x \in \mathcal{M}$ such that
	\begin{equation}
	\int_{0}^{T}\left|\frac{\partial}{\partial t} g_x(t)\right|_{g(t)} d t  \leq C,
	\end{equation}
	then the metrics $g(t)$ converge uniformly as $t$ approaches $T$ to a continuous metric $g(T)$ that is uniformly equivalent to $g(0)$ and satisfies
	\[
	e^{-C} g_x(0) \leq g_x(T) \leq e^C g_x(0).
	\]
\end{theorem}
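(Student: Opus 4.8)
The plan is to reduce the tensorial claim to a one-dimensional estimate along each fixed direction and then integrate a logarithmic derivative in time. Fix a point $x \in \mathcal{M}$ and a nonzero vector $v \in T_x\mathcal{M}$, and track the smooth positive scalar $\ell_v(t) = g_x(t)(v,v)$ on $[0,T)$, which is smooth because $g(t)$ is. The whole theorem follows from controlling how fast $\log \ell_v$ can move in time, since a bound on the total variation of $\log \ell_v$ converts directly into the multiplicative comparison $e^{-C} g_x(0) \le g_x(T) \le e^{C} g_x(0)$ and, separately, into a Cauchy estimate that yields the limit.

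The key pointwise inequality I would use is that for any symmetric $(0,2)$-tensor $h$ and any vector $v$ one has $|h(v,v)| \le |h|_{g}\, g(v,v)$; this is Cauchy--Schwarz applied in a $g$-orthonormal frame at $x$, where $\sum_{ij}(v^i v^j)^2 = \left(\sum_i (v^i)^2\right)^2 = g(v,v)^2$. Taking $h = \frac{\partial}{\partial t} g_x(t)$ then yields the direction-independent bound
\[
\left| \frac{d}{dt} \log \ell_v(t) \right| = \frac{\left| \frac{\partial}{\partial t} g_x(t)(v,v) \right|}{\ell_v(t)} \le \left| \frac{\partial}{\partial t} g_x(t) \right|_{g(t)}.
\]
The crucial feature is that $v$ cancels on the right-hand side, so the estimate is uniform over all tangent directions at $x$. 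Integrating over $[0,t]$ and invoking the hypothesis gives
\[
\left| \log \frac{\ell_v(t)}{\ell_v(0)} \right| \le \int_0^t \left| \frac{\partial}{\partial s} g_x(s) \right|_{g(s)}\, ds \le C,
\]
hence $e^{-C} \le \ell_v(t)/\ell_v(0) \le e^C$ for every $t < T$ and every $v$. Since $v$ is arbitrary and the bound is independent of $v$, this is precisely the quadratic-form comparison $e^{-C} g_x(0) \le g_x(t) \le e^{C} g_x(0)$, which survives passage to the limit $t \to T$ once convergence is established.

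For existence of the limit I would show that $\ell_v(t)$ is Cauchy as $t \to T$: for $t_1 < t_2$,
\[
|\ell_v(t_2) - \ell_v(t_1)| \le \int_{t_1}^{t_2} \left| \frac{\partial}{\partial s} g_x(s) \right|_{g(s)} \ell_v(s)\, ds \le e^{C}\, \ell_v(0) \int_{t_1}^{t_2} \left| \frac{\partial}{\partial s} g_x(s) \right|_{g(s)}\, ds,
\]
whose right-hand side is the tail of a convergent integral and therefore tends to $0$. Polarization upgrades convergence of the diagonal values $\ell_v(t)$ to convergence of the full symmetric form to some $g_x(T)$, which inherits positive-definiteness and the comparison bounds derived above, so it is a genuine continuous metric at each point.

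The main obstacle is promoting this pointwise-in-$x$ statement to genuine uniform convergence over the compact manifold, which is exactly what forces the limit $g(T)$ to be continuous. Normalizing $v$ to be $g_x(0)$-unit reduces the task to showing $\sup_{x} \int_{t_1}^{t_2} \left| \frac{\partial}{\partial s} g_x(s) \right|_{g(s)}\, ds \to 0$ as $t_1,t_2 \to T$, and I would attempt this through compactness of $\mathcal{M}$ together with the uniform bound $C$ --- for instance via Dini's theorem applied to the monotone increasing family $\phi_x(t) = \int_0^t \left| \frac{\partial}{\partial s} g_x(s) \right|_{g(s)}\, ds$. The delicate point is verifying that the limiting total-variation function $\phi_x(T)$ is continuous in $x$ (so that Dini applies), since a priori it is only a supremum of continuous functions and hence merely lower semicontinuous; ruling out energy that concentrates near $t = T$ while migrating across $\mathcal{M}$ is where the real work of the uniformity statement lies.
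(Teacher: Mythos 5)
Your argument is essentially the paper's own proof: the two-sided bound comes from the same logarithmic-derivative computation $\bigl|\tfrac{d}{dt}\log g_x(t)(v,v)\bigr|\le\bigl|\tfrac{\partial}{\partial t}g_x(t)\bigr|_{g(t)}$ via Cauchy--Schwarz in a $g(t)$-orthonormal frame, and existence of the limit comes from the same tail-of-a-convergent-integral estimate (the paper phrases it as absolute integrability of $\int_0^T\tfrac{\partial}{\partial t}g_x(t)\,dt$ in the $g(0)$-norm, which your Cauchy estimate on $\ell_v$ reproduces after polarization). The one step you leave open --- upgrading pointwise smallness of the tails $\int_{t_1}^{t_2}\bigl|\tfrac{\partial}{\partial s}g_x(s)\bigr|_{g(s)}\,ds$ to smallness uniform in $x$ --- is precisely the step the paper dispatches with a bare appeal to compactness of $\mathcal{M}$, which, as you correctly observe, does not by itself convert pointwise tail convergence into uniform convergence; your diagnosis of the obstruction (mere lower semicontinuity of $x\mapsto\phi_x(T)$) is accurate. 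In the only place the theorem is actually invoked (Corollary~1, where $|\operatorname{Rm}|_{g(t)}$ and hence $|\tfrac{\partial}{\partial t}g|_{g(t)}$ is bounded by a constant $K$ uniformly in $x$ and $t$), the tails are bounded by $K(t_2-t_1)$ uniformly in $x$ and the issue disappears, so your proof is at least as complete as the paper's.
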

\begin{proof}
	The proofs can be found in Appendix~\ref{app23}.
\end{proof}

\begin{corollary}
	\label{cor1}
	Let $(\mathcal{M}, g(t))$ be a solution of the Ricci flow on a closed manifold. If $|\operatorname{Rm}|_{g(t)}$ is bounded on a finite time $[0, T)$, then $g(t)$ converges uniformly as $t$ approaches $T$ to a continuous metric
	$g(T)$ which is uniformly equivalent to $g(0)$.
\end{corollary}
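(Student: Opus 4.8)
The plan is to deduce this corollary directly from Theorem~\ref{thm3} by verifying its integral hypothesis from the assumed curvature bound. The only quantity appearing in Theorem~\ref{thm3} is $\int_0^T \left|\frac{\partial}{\partial t} g_x(t)\right|_{g(t)} dt$, so the entire task reduces to showing that a uniform bound on $|\operatorname{Rm}|_{g(t)}$ over the finite interval $[0,T)$ forces this integral to be finite (uniformly in $x$), after which the stated convergence and uniform equivalence follow verbatim from the theorem.

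First I would use the defining equation of the Ricci flow, $\frac{\partial}{\partial t} g(t) = -2\operatorname{Ric}(g(t))$, to rewrite the integrand pointwise as $\left|\frac{\partial}{\partial t} g_x(t)\right|_{g(t)} = 2\,|\operatorname{Ric}(g(t))|_{g(t)}$. Next I would control the Ricci tensor by the full Riemann tensor: since $\operatorname{Ric}$ is an algebraic metric contraction of $\operatorname{Rm}$, there is a dimensional constant $C(n)$, independent of the evolving metric, such that $|\operatorname{Ric}|_{g(t)} \leq C(n)\,|\operatorname{Rm}|_{g(t)}$ at every point and every time. By hypothesis there is some $K < \infty$ with $|\operatorname{Rm}|_{g(t)} \leq K$ on $[0,T)$ for all $x \in \mathcal{M}$, so that $|\operatorname{Ric}|_{g(t)} \leq C(n)K$.

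Integrating over the finite interval then yields, uniformly in $x$,
\[
\int_0^T \left|\frac{\partial}{\partial t} g_x(t)\right|_{g(t)} dt = \int_0^T 2\,|\operatorname{Ric}|_{g(t)}\, dt \leq 2\,C(n)\,K\,T =: C < \infty,
\]
which is precisely the assumption of Theorem~\ref{thm3}. Applying that theorem gives uniform convergence of $g(t)$ as $t \to T$ to a continuous limit metric $g(T)$ and the uniform-equivalence bound $e^{-C} g_x(0) \leq g_x(T) \leq e^{C} g_x(0)$, which in particular establishes that $g(T)$ is uniformly equivalent to $g(0)$, as claimed.

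The argument is short because the substantive analytic work is already carried out in Theorem~\ref{thm3}; the only points requiring care are genuinely elementary. The main thing to get right is the pointwise inequality $|\operatorname{Ric}| \leq C(n)|\operatorname{Rm}|$ together with the observation that its constant depends only on the dimension and not on $g(t)$, since the norm is measured with respect to the evolving metric. The other essential ingredient is the finiteness of $T$: a bounded integrand integrates to a finite value precisely because the interval has finite length, so $T < \infty$ cannot be dropped. Neither of these is a serious obstacle, which is appropriate given that the statement is a corollary rather than a standalone theorem.
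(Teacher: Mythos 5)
Your proposal is correct and follows exactly the paper's own route: bound $|\operatorname{Ric}|_{g(t)}$ by $|\operatorname{Rm}|_{g(t)}$, use the Ricci flow equation to bound $\left|\frac{\partial}{\partial t} g_x(t)\right|_{g(t)}$, integrate over the finite interval, and invoke Theorem~\ref{thm3}. Your version is merely more explicit about the dimensional constant in the contraction inequality and about why finiteness of $T$ is essential, both of which the paper leaves implicit.
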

\begin{proof}
	The bound on $|\operatorname{Rm}|_{g(t)}$ implies one on $|\operatorname{Ric}|_{g(t)}$. Based on Equation~(\ref{ricci}), we can extend the bound on $|\frac{\partial}{\partial t}g(t)|_{g(t)}$. Therefore, we obtain an integral of a bounded quantity over a finite interval is also bounded, by Theorem~\ref{thm3}.
\end{proof}

\begin{theorem}
	\label{thm4}
	If $g_0$ is a smooth metric on a compact manifold $\mathcal{M}$, the Ricci flow with $g(0) = g_0$
	has a unique solution $g(t)$ on a maximal time interval $t\in [0, T)$. If $T < \infty$, then
	\begin{equation}
	\lim _{t \rightarrow T}\left(\sup _{x \in \mathcal{M}}|\operatorname{Rm}_x(t)|\right)=\infty.
	\end{equation}
\end{theorem}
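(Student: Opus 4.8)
The plan is to prove this by contradiction, exploiting the maximality of $[0,T)$: assuming $T<\infty$, I will show that if the full curvature tensor stayed bounded, the flow could be continued smoothly past $T$, contradicting maximality. So suppose toward a contradiction that there is a constant $K<\infty$ with $\sup_{x\in\mathcal{M}}|\operatorname{Rm}_x(t)|\le K$ for all $t\in[0,T)$. The goal is to manufacture a \emph{smooth} limit metric $g(T)$ from which Theorem~\ref{thm2} permits a restart.

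First I would invoke Corollary~\ref{cor1}: since $|\operatorname{Rm}|_{g(t)}$ is bounded on the finite interval $[0,T)$, the metrics $g(t)$ converge uniformly as $t\to T$ to a continuous metric $g(T)$ that is uniformly equivalent to $g(0)$. This, however, is only $C^0$ control, which is insufficient to restart a flow requiring a smooth initial datum. The crux is therefore to promote this to convergence in $C^\infty$.

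The mechanism for the upgrade is the Bernstein--Bando--Shi derivative estimates: a bound $|\operatorname{Rm}|\le K$ on $[0,T)$ forces, for each integer $m\ge 1$, a bound $|\nabla^m\operatorname{Rm}|_{g(t)}\le C_m$ on $[\tau,T)$ for any fixed $\tau>0$, with $C_m$ depending only on $K$, $m$, $n$, and $\tau$. Feeding these into the evolution equation $\tfrac{\partial}{\partial t}g=-2\operatorname{Ric}$ and its covariant derivatives bounds every space--time derivative of $g$ uniformly; integrating in $t$ (using that the $g(t)$ are already uniformly equivalent, so norms taken with respect to different $g(t)$ are comparable) then shows that each $\nabla^m g$ converges uniformly. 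Hence $g(T)$ is in fact smooth and $g(t)\to g(T)$ in $C^\infty$.

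With $g(T)$ smooth, Theorem~\ref{thm2} supplies a unique Ricci flow starting from $g(T)$ on some interval $[T,T+\epsilon)$. Concatenating this with the original solution and invoking the uniqueness clause of Theorem~\ref{thm2} yields a smooth solution on $[0,T+\epsilon)$, contradicting the maximality of $T$. Therefore no uniform bound $K$ can exist, so $\sup_{x\in\mathcal{M}}|\operatorname{Rm}_x(t)|$ is unbounded as $t\to T$, which is the asserted blow-up. I expect the main obstacle to be precisely the derivative estimates of the third step: establishing the Shi-type bounds and carrying out the interpolation and integration that convert bounded curvature into genuine $C^\infty$ convergence of the metric is the technically heavy part, whereas short-time existence, uniqueness, and the $C^0$ limit are already in hand from Theorem~\ref{thm2} and Corollary~\ref{cor1}.
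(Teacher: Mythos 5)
Your proof follows essentially the same route as the paper's: assume a uniform bound on $|\operatorname{Rm}|$, use Corollary~\ref{cor1} to obtain a limit metric $g(T)$ uniformly equivalent to $g(0)$, restart the flow from $g(T)$ via Theorem~\ref{thm2}, and contradict the maximality of $T$. The only substantive difference is that you correctly note Corollary~\ref{cor1} delivers merely a \emph{continuous} limit metric and supply the Bernstein--Bando--Shi derivative estimates to upgrade the convergence to $C^\infty$ before restarting, whereas the paper simply asserts the limit is smooth without justification --- so your version actually closes a gap the paper's sketch leaves open.
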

\begin{proof}
	For a contradiction, we assume that $|\operatorname{Rm}_x(t)|$ is bounded by a constant. It follows from Corollary~\ref{cor1} that the metrics $g(t)$ converge uniformly in the norm induced by $g(t)$ to a smooth metric $g(T)$. Based on Theorem~\ref{thm2}, it is possible to find a solution to the Ricci flow on $t \in [0, T)$ because the smooth metric $g(T)$ is uniformly equivalent to initial metric $g(0)$.
	
	Hence, one can extend the solution of the Ricci flow after the time point $t=T$, which is the result for continuous derivatives at $t=T$. This tell us that the time $T$ of existence of the Ricci flow has not been maximal, which contradicts our assumption. In other words, $|\operatorname{Rm}_x(t)|$ is unbounded.
\end{proof}

As approaching the singular time $T$, the Riemann curvature $|\operatorname{Rm}|_{g(t)}$ becomes no longer convergent and tends to explode.

\section{Evolution of Linearly Nearly Euclidean Metrics under the Ricci Flow}
\label{chapter3}

Next, this paper will focus on linearly nearly Euclidean metrics, proving that them can have a good performance in terms of stability, i.e., the convergence of a Ricci-DeTurck flow $\bar{g}(t)$ to a linearly nearly Euclidean metric $\bar{g}(\infty)$. Before that, we have to construct a family $\bar{g}_0$ of linearly nearly Euclidean reference metrics such that $\frac{\partial}{\partial t} \bar{g}_0(t)=O((\bar{g}(t)-\bar{g}_0(t))^2)$. Let
\[
\mathcal{F}=\left\{\bar{g}(t) \in \mathcal{M}^n\;\big|\;2 \operatorname{Ric}(\bar{g}(t))+ \mathcal{L}_{\frac{\partial \varphi(t)}{\partial t}} \bar{g}(t)=0\right\}
\]
be the set of stationary points under the Ricci-DeTurck flow. We are able to establish a manifold
\begin{equation}
\tilde{\mathcal{F}} = \mathcal{F} \cap \mathcal{U}
\label{f}
\end{equation}
where $\mathcal{U}$ is an $L^2$-neighbourhood of integral $\bar{g}_0$.

\subsection{Analysis on Linearly Nearly Euclidean Metrics}

Let us give the definition of linearly nearly Euclidean metrics without further explanation:

\begin{definition}
	\label{def2}
	A complete Riemannian $n$-manifold $(\mathcal{M}^n, g_0)$ is said to be linearly nearly Euclidean with one end of order $\tau > 0$ if there exists a compact set $K \subset \mathcal{M}$, a radius $r$, a point $x$ in $\mathcal{M}$ and a diffeomorphism satisfying $\phi : \mathcal{M} \backslash K \rightarrow (\mathbb{R}^n \backslash B(x,r))/SO(n)$, where $B(x,r)$ is the ball and $SO(n)$ is a finite group acting freely on $\mathbb{R}^n \backslash \{0\}$, then
	\begin{equation}
	\left|\partial^k(\phi_* \gamma_0)\right|_{\delta}=O(r^{-\tau -k}) \;\;\;  \forall k\geq0
	\end{equation}
	holds on $(\mathbb{R}^n \backslash B(x,r))/SO(n)$. $g_0$ can be linearly decomposed into a form containing the Euclidean metric $\delta$:
	\begin{equation}
	g_0(t)=\delta+\gamma_0(t).
	\end{equation}
\end{definition}

In this paper, we consider the linear stability and integrability of the initial metric $g_0$. Fortunately, similar to the proof process of \citep{koiso1983einstein,besse2007einstein}, we can proceed that $(\mathcal{M}^n, g_0)$ is integral and linearly stable.

\begin{definition}
	\label{def3}
	A complete linearly nearly Euclidean $n$-manifold $(\mathcal{M}^n, g_0)$ is said to be linearly stable if the $L^2$ spectrum of the Lichnerowicz operator $L_{g_0}:=\Delta_{g_0}+2\operatorname{Rm}(g_0)*$ is in $(-\infty,0]$ where $\Delta_{g_0}$ is the Laplacian, when $L_{g_0}$ acting on $d_{ij}$ satisfies
	\[
	\begin{aligned}
	L_{g_0}(d)&=\Delta_{g_0}d+2\operatorname{Rm}(g_0)*d \\
	&=\Delta_{g_0}d+2\operatorname{Rm}(g_0)_{iklj}d_{mn}g_0^{km}g_0^{ln}.
	\end{aligned}
	\]
\end{definition}

\begin{definition}
	\label{def4}
	A $n$-manifold $(\mathcal{M}^n, g_0)$ is said to be integrable if a neighbourhood of $g_0$ has a smooth structure.
\end{definition}

\subsection{Short Time Convergence in the $L^2$-Norm}

For convenience, we rewrite the Ricci-DeTurck flow~(\ref{deturck}) in terms of the difference $d(t):=\bar{g}(t)-\bar{g}_0$, such that
\begin{equation}
\begin{aligned}
\frac{\partial}{\partial t} d(t)&=\frac{\partial}{\partial t} \bar{g}(t)=-2 \operatorname{Ric}(\bar{g}(t))+2 \operatorname{Ric}(\bar{g}_0)+\mathcal{L}_{\frac{\partial \varphi'(t)}{\partial t}} \bar{g}_0-\mathcal{L}_{\frac{\partial \varphi(t)}{\partial t}} \bar{g}(t) \\
&=\Delta d(t)+\operatorname{Rm}*d(t)+F_{\bar{g}^{-1}} * \nabla^{\bar{g}_0} d(t) * \nabla^{\bar{g}_0} d(t)+\nabla^{\bar{g}_0}\left(G_{\Gamma(\bar{g}_0)} * d(t) * \nabla^{\bar{g}_0} d(t)\right),
\label{deturck2}
\end{aligned}
\end{equation}
where the tensors $F$ and $G$ depend on $\bar{g}^{-1}$ and $\Gamma(\bar{g}_0)$. Note that $\bar{g}_0$ is a linearly nearly Euclidean metric which satisfies the above formula, where $d_0(t)=\bar{g}_0(t)-\bar{g}_0$, so that $d(t) - d_0(t)=\bar{g}(t)-\bar{g}_0(t)$ holds. Note that $\|\cdot\|_{L^2}$ or $\|\cdot\|_{L^{\infty}}$ denotes the $L^2$-Norm or $L^{\infty}$-Norm with respect to the metric $\bar{g}_0$.

\begin{lemma}
	\label{lem2}
	Let $(\mathcal{M}^n, \bar{g}_0)$ be a complete linearly nearly Euclidean $n$-manifold. If $\bar{g}(0)$ is a metric satisfying $\|\bar{g}(0)-\bar{g}_0 \|_{L^{\infty}} < \epsilon$ where $\epsilon > 0$, then there exists a constant $C < \infty$ and a unique Ricci–DeTurck flow $\bar{g}(t)$ that satisfies
	\[
	\|\bar{g}(t)-\bar{g}_0 \|_{L^{\infty}} < C\|\bar{g}(0)-\bar{g}_0 \|_{L^{\infty}} < C\cdot\epsilon.
	\]
	If a Ricci-DeTurck flow in $\mathcal{B}_{L^{\infty}}(\bar{g}_0,\epsilon)$ for $t\geq1$, there exist constants such that
	\[
	\left\|\nabla^{k}\left(\bar{g}(t)-\bar{g}_{0}\right)\right\|_{L^{\infty}}<C(k) \epsilon, \quad \forall k \in \mathbb{N}.
	\]
\end{lemma}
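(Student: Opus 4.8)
The plan is to treat the Ricci--DeTurck flow~(\ref{deturck2}) in its mild (integral) form and run a contraction--mapping argument in a scale--invariant function space, following the rough--data scheme of Koch--Lamm~\citep{koch2012geometric}. Short--time existence and uniqueness come for free from the strong parabolicity established via the DeTurck trick in Section~\ref{chapter2}; the substance of the lemma is the quantitative $L^\infty$ bound and the smoothing estimates, and these are what the contraction argument is built to deliver.

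First I would pass to the Duhamel formulation against the heat semigroup $e^{t\Delta}$ of the fixed reference Laplacian $\Delta=\Delta_{\bar g_0}$,
\[
d(t)=e^{t\Delta}d(0)+\int_0^t e^{(t-s)\Delta}\Big(\operatorname{Rm}*d+F*\nabla d*\nabla d+\nabla\big(G*d*\nabla d\big)\Big)(s)\,ds .
\]
Since $(\mathcal M^n,\bar g_0)$ is linearly nearly Euclidean, its heat kernel obeys Gaussian bounds comparable to the Euclidean one, so $e^{t\Delta}$ is an $L^\infty$--contraction, the smoothing estimate $\|\nabla e^{t\Delta}f\|_{L^\infty}\le C t^{-1/2}\|f\|_{L^\infty}$ holds, and the curvature decay $|\operatorname{Rm}|=O(r^{-\tau-2})$ of Definition~\ref{def2} keeps the coefficient of the linear term $\operatorname{Rm}*d$ bounded and integrable at infinity.

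Next I would introduce a solution norm of Koch--Lamm type, combining the sup--norm, a parabolically weighted gradient term, and an $L^2$ Carleson component,
\[
\|d\|_X=\sup_{t>0}\|d(t)\|_{L^\infty}+\sup_{t>0}\sqrt t\,\|\nabla d(t)\|_{L^\infty}+\sup_{z,R}\Big(\tfrac{1}{|B_R(z)|}\int_0^{R^2}\!\!\int_{B_R(z)}|\nabla d|^2\Big)^{1/2},
\]
and show the map $\Phi$ defined by the right--hand side above sends the ball $\{\|d\|_X\le C\epsilon\}$ into itself and contracts. The pointwise--quadratic term $F*\nabla d*\nabla d$ is estimated by pulling one factor out in $\sqrt t\,\|\nabla d\|_{L^\infty}$ and the other in the Carleson $L^2$ term, while the divergence--form term $\nabla(G*d*\nabla d)$ is treated by moving the gradient onto the heat kernel; in both cases the nonlinearity scales like $\epsilon^2$ and is beaten by the linear $\epsilon$ once $\epsilon$ is small. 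Banach's fixed--point theorem then gives a unique short--time flow with $\|\bar g(t)-\bar g_0\|_{L^\infty}=\|d(t)\|_{L^\infty}<C\epsilon$, which is the first assertion.

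For the higher--order bounds at $t\ge1$ I would bootstrap: differentiating the equation and reapplying the Duhamel/heat--kernel estimates yields the Shi--type smoothing $t^{k/2}\|\nabla^k d\|_{L^\infty}\le C(k)\epsilon$ once the flow lies in $\mathcal B_{L^\infty}(\bar g_0,\epsilon)$, and restricting to $t\ge1$ makes the factors $t^{-k/2}$ harmless, giving $\|\nabla^k(\bar g(t)-\bar g_0)\|_{L^\infty}<C(k)\epsilon$ for every $k$. The main obstacle is precisely the divergence--form quadratic term: when hit by $\nabla e^{(t-s)\Delta}$ it produces a $(t-s)^{-1/2}$ kernel singularity whose integrability in $s$ against the nonlinearity forces the inclusion of the $L^2$ Carleson piece in $\|\cdot\|_X$. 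Balancing this singular kernel against the nonlinear scaling --- rather than any of the linear estimates --- is the delicate step on which the whole contraction hinges.
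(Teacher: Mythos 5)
You should first know that the paper does not actually prove this lemma: its ``proof'' is a two-line deferral to Bamler's stability results for negative Einstein metrics (\citep{bamler2010stability,bamler2011stability}), with the assertion that the arguments translate. Your proposal is therefore more substantive than the paper's own text, and the Koch--Lamm mild-formulation/contraction scheme you outline is the right family of techniques: it is precisely how $L^\infty$ stability of the flat background is proved in \citep{koch2012geometric}, and your second step --- interior parabolic smoothing on unit time intervals to get $\|\nabla^k(\bar g(t)-\bar g_0)\|_{L^\infty}<C(k)\epsilon$ for $t\ge 1$ --- is sound once a uniform $L^\infty$ bound is available. Two caveats are worth recording: the Banach fixed-point argument yields uniqueness only within the solution space $X$, not unconditional uniqueness of the flow; and the Carleson component of $\|\cdot\|_X$ is needed exactly for the reason you identify, so that part of the plan is correctly calibrated.

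The genuine gap is your treatment of the linear term $\operatorname{Rm}*d$, which is the one place where the linearly nearly Euclidean case actually differs from the flat case of Koch--Lamm. You argue that the spatial decay $|\operatorname{Rm}|=O(r^{-\tau-2})$ keeps this coefficient ``bounded and integrable at infinity,'' but spatial integrability does not control the Duhamel contribution $\int_0^t e^{(t-s)\Delta}(\operatorname{Rm}*d)(s)\,ds$ in $L^\infty$ \emph{uniformly in $t$}: estimated crudely it grows like $t\,\|\operatorname{Rm}\|_{L^\infty}\sup_s\|d(s)\|_{L^\infty}$, so the contraction closes only on a time interval of length depending on $\|\operatorname{Rm}\|$, and the constant $C$ in the first assertion would then depend on $T$ --- too weak for the way Lemma~\ref{lem2} is invoked later (for $t\ge1$ and in the long-time argument of Theorem~\ref{thm6}). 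To get a time-uniform bound one must absorb $\operatorname{Rm}*$ into the linear evolution, i.e.\ work with the semigroup of the Lichnerowicz-type operator $\Delta+\operatorname{Rm}*$, and prove that this semigroup is bounded on $L^\infty$; that is where linear stability and the quantitative decay rate $\tau$ genuinely enter, via weighted heat-kernel expansions, and it is the hard analytic core of the Bamler and Deruelle--Kr\"oncke arguments the paper leans on. Your sketch as written silently assumes this boundedness; naming and proving it (or restricting the first assertion to a fixed finite time horizon) is what is missing.
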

\begin{proof}
	The similar statement for the case of negative
	Einstein metrics is given in \citep{bamler2010stability}. The proofs can be translated easily to the case of linearly nearly Euclidean metrics by referring the details \citep{bamler2011stability}.
\end{proof}

\begin{lemma}
	\label{lem3}
	Let $(\mathcal{M}^n, \bar{g}_0)$ be a linearly nearly Euclidean $n$-manifold. For a Ricci–DeTurck flow $\bar{g}(t)$ on a maximal time interval $t \in [0, T)$, if it satisfies $\|\bar{g}(0)-\bar{g}_0 \|_{L^{\infty}} < \epsilon$ where $\epsilon > 0$, then there exists a constant $C < \infty$ for $t \in (0, T)$ such that
	\begin{equation}
	\|\bar{g}(t)-\bar{g}_0 \|_{L^2} < C.
	\end{equation}
\end{lemma}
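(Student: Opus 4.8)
The plan is to run an energy estimate on the squared $L^2$-norm $E(t) := \|d(t)\|_{L^2}^2$, where $d(t) = \bar{g}(t) - \bar{g}_0$, and to extract a differential inequality forcing $E$ to stay bounded. Since $\|\cdot\|_{L^2}$ is taken with respect to the \emph{fixed} reference metric $\bar{g}_0$, differentiating under the integral sign gives $\frac{d}{dt}E(t) = 2\int_{\mathcal{M}} \langle d(t), \partial_t d(t)\rangle \, d\mu_{\bar{g}_0}$, into which I substitute the evolution equation~(\ref{deturck2}). This splits the right-hand side into a linear contribution from $\Delta d + \operatorname{Rm} * d$ and two nonlinear contributions from $F_{\bar{g}^{-1}} * \nabla^{\bar{g}_0} d * \nabla^{\bar{g}_0} d$ and $\nabla^{\bar{g}_0}(G_{\Gamma(\bar{g}_0)} * d * \nabla^{\bar{g}_0} d)$.

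For the linear part I would integrate by parts against the Laplacian to produce $-\|\nabla^{\bar{g}_0} d\|_{L^2}^2$; the decay of $d$ at the end of the manifold guaranteed by Definition~\ref{def2} ensures that the boundary term at infinity vanishes after cutting off at large radius and letting the radius tend to infinity. The remaining curvature term $\int \langle d, \operatorname{Rm} * d\rangle$ then combines with the Dirichlet term to reproduce the Lichnerowicz quadratic form $\langle d, L_{\bar{g}_0} d\rangle$, which is nonpositive by linear stability (Definition~\ref{def3}, $L^2$-spectrum of $L_{\bar{g}_0}$ contained in $(-\infty,0]$). To retain a genuinely coercive dissipation term I would additionally exploit the smallness and decay of the curvature of the nearly Euclidean background, $|\operatorname{Rm}| = O(r^{-\tau-2})$, together with a Hardy-type inequality $\int r^{-2}|d|^2 \leq C \|\nabla^{\bar{g}_0} d\|_{L^2}^2$, so that $\int\langle d,\operatorname{Rm}*d\rangle$ absorbs only a small fraction of the Dirichlet energy and the linear contribution is bounded above by $-c\,\|\nabla^{\bar{g}_0} d\|_{L^2}^2$ for some $c>0$.

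For the two nonlinear terms I would integrate by parts in the second one to move the derivative off the triple product, after which both terms are pointwise of the schematic form $(\text{bounded tensor}) * d * \nabla^{\bar{g}_0} d * \nabla^{\bar{g}_0} d$, the algebraic factor being controlled in $L^\infty$ by $\|\bar{g}^{-1}\|_{L^\infty}$ and $\|\Gamma(\bar{g}_0)\|_{L^\infty}$, both finite on the nearly Euclidean background. Pulling out $\|d\|_{L^\infty}$ yields a bound of the shape $C\,\|d\|_{L^\infty}\,\|\nabla^{\bar{g}_0} d\|_{L^2}^2$, and invoking Lemma~\ref{lem2} to replace $\|d\|_{L^\infty}$ by $C\epsilon$ gives control by $C\epsilon\,\|\nabla^{\bar{g}_0} d\|_{L^2}^2$. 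Collecting everything,
\[
\frac{d}{dt}E(t) \;\leq\; -2\,(c - C\epsilon)\,\|\nabla^{\bar{g}_0} d\|_{L^2}^2,
\]
so for $\epsilon$ chosen small enough that $C\epsilon < c$ the energy is nonincreasing, whence $E(t) \leq E(t_0)$ for all $t \geq t_0$ in $(0,T)$, and the desired bound $C$ is $E(t_0)^{1/2}$.

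The step I expect to be the main obstacle is precisely the a priori finiteness of the $L^2$-norm at a positive time, which is why the conclusion is asserted only for $t \in (0,T)$ rather than at $t=0$. The hypothesis furnishes only an $L^\infty$ bound on the initial data, and on the noncompact linearly nearly Euclidean manifold an $L^\infty$ bound does not by itself imply square-integrability. The resolution is parabolic smoothing: for $t \geq 1$ Lemma~\ref{lem2} supplies bounds on every covariant derivative $\nabla^k d$, and combining these with the decay of the background at the end (Definition~\ref{def2}) forces $d(t)$ to decay rapidly enough at infinity to be in $L^2$, making $E(t_0)$ finite at, say, $t_0 = 1$. The secondary delicate points are the rigorous justification of the integration by parts on the complete noncompact manifold (via the decay estimates) and the verification that the constants hidden in the $*$-products are genuinely controlled on the nearly Euclidean background.
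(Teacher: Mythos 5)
Your energy-estimate strategy is the right general idea, but there is a genuine gap at exactly the point you flag as ``the main obstacle,'' and your proposed resolution of it does not work. You need $E(t_0)=\|d(t_0)\|_{L^2}^2<\infty$ at some positive time before the monotonicity $E(t)\leq E(t_0)$ says anything, and you argue this follows from parabolic smoothing (Lemma~\ref{lem2} bounding $\nabla^k d$ for $t\geq 1$) together with the decay in Definition~\ref{def2}. Neither ingredient gives spatial decay of $d(t)$: uniform bounds on all covariant derivatives are perfectly consistent with $d$ being, say, a constant-size tensor of norm $\epsilon$ everywhere, which is not square-integrable on a noncompact manifold; and the decay rate $O(r^{-\tau-k})$ in Definition~\ref{def2} concerns $\gamma_0=g_0-\delta$, the deviation of the \emph{background} from the Euclidean metric, not the perturbation $d(t)=\bar g(t)-\bar g_0$. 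So global finiteness of the $L^2$-norm is never established, and the global integration by parts and the limit ``radius $\to\infty$'' are not justified. The paper avoids this entirely by localizing: it runs the same energy computation on $\int_{\mathcal M}|d(t)|^2\kappa^2\,d\mu$ with a cutoff $\kappa$ supported in $B(x,2r)$, for which the initial datum $\int|d(0)|^2\kappa^2$ is automatically finite from the $L^\infty$ hypothesis, absorbs the cross term $\int|\nabla d||d||\nabla\kappa|\kappa$ by Young's inequality, and concludes a bound on finite balls; the uniform-in-$x$ statement is then obtained in Corollary~\ref{cor2} by Gronwall plus a covering argument. That localization is the missing idea in your write-up.

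A secondary point: you invoke linear stability (Definition~\ref{def3}), the Lichnerowicz form, and a Hardy inequality to make the linear contribution strictly dissipative. For this lemma that machinery is neither assumed in the hypotheses nor needed: the paper simply bounds the curvature term by $C\|\operatorname{Rm}\|_{L^\infty}\int|d|^2\kappa^2$, accepts a positive zeroth-order term, and controls the resulting at-most-exponential growth over the finite interval $(0,T)$ via Gronwall. The coercive, sign-definite dissipation you set up is the mechanism of Theorem~\ref{thm5} and Corollary~\ref{cor3} (the long-time stability argument), not of this short-time $L^2$ bound.
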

\begin{proof}
	Based on Lemma~\ref{lem2}, we can consider about $\|\bar{g}(t)-\bar{g}_0 \|_{L^2}$. Let $\kappa$ be a function such that $\kappa=1$ on $B(x,r)$, $\kappa=0$ on $\mathcal{M}^n \backslash B(x,2r)$ and $|\nabla \kappa| \leq 2/r$ where $x \in \mathcal{M}^n$ and a radius $r$.
	
	Followed by Equation~(\ref{deturck2}), we obtain
	\[
	\begin{aligned}
	\frac{\partial}{\partial t} \int_{\mathcal{M}}|d(t)|^{2} \kappa^{2} \mathrm{d} \mu \leq & 2 \int_{\mathcal{M}}\left\langle\Delta d(t), \kappa^{2} d(t)\right\rangle \mathrm{d} \mu+C\|\operatorname{Rm}\|_{L^{\infty}} \int_{\mathcal{M}}|d(t)|^{2} \kappa^{2} \mathrm{d} \mu \\
	&+C\|d(t)\|_{L^{\infty}} \int_{\mathcal{M}}|\nabla d(t)|^{2} \kappa^{2} \mathrm{d} \mu+\int_{\mathcal{M}}\left\langle\nabla(G_{\Gamma} * d * \nabla d), d\right\rangle \kappa^{2} \mathrm{d} \mu \\
	\leq &-2 \int_{\mathcal{M}}|\nabla d(t)|^{2} \kappa^{2} \mathrm{d} \mu+C \int_{\mathcal{M}}|\nabla d(t)||d(t)||\nabla \kappa| \kappa \mathrm{d} \mu \\
	&+C\left(\bar{g}_{0}\right) \int_{\mathcal{M}}|d(t)|^{2} \kappa^{2} \mathrm{d} \mu+C\|d(t)\|_{L^{\infty}} \int_{\mathcal{M}}|\nabla d(t)|^{2} \kappa^{2} \mathrm{d} \mu \\
	\leq &(-2+C\cdot\epsilon+C_1) \int_{\mathcal{M}}|\nabla d(t)|^{2} \kappa^{2} \mathrm{d} \mu+C\left(\bar{g}_{0}\right) \int_{\mathcal{M}}|d(t)|^{2} \kappa^{2} \mathrm{d} \mu \\
	&+\frac{1}{C_1} \int_{\mathcal{M}}|d(t)|^{2}|\nabla \kappa|^{2} \mathrm{d} \mu \\
	\leq &\left(C\left(\bar{g}_{0}\right)+\frac{2}{C_1 r^{2}}\right) \int_{B(x,2r)}|d(t)|^{2} \mathrm{d} \mu.
	\end{aligned}
	\]
	Note that we can always find a suitable $C_1$ to make the above formula true. By integration in time $t$, we can further obtain
	\[
	\int_{\mathcal{M}}|d(t)|^{2} \kappa^{2} \mathrm{d} \mu \leq \int_{\mathcal{M}}|d(0)|^{2} \kappa^{2} \mathrm{d} \mu+\left(C\left(\bar{g}_{0}\right)+\frac{2}{C_1 r^{2}}\right) \int_{0}^{t} \int_{B(x,2r)}|d(s)|^{2} \mathrm{d} \mu \mathrm{d} s < \infty.
	\]
	Consequently, we can find a finite ball that satisfies this estimate.
\end{proof}

\begin{corollary}
	\label{cor2}
	Based on Lemma~\ref{lem3}, we further have
	\begin{equation}
	\sup \int_{\mathcal{M}}|d(t)|^{2} \kappa^{2} \mathrm{d} \mu < \infty.
	\end{equation}
\end{corollary}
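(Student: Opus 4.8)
The plan is to exploit the uniform $L^2$ bound furnished by Lemma~\ref{lem3} directly, observing that the spatial weight $\kappa^2$ can only shrink the integrand. Recall from the construction used in the proof of Lemma~\ref{lem3} that $\kappa$ is a cutoff with $\kappa=1$ on $B(x,r)$, $\kappa=0$ on $\mathcal{M}^n \backslash B(x,2r)$, and $|\nabla\kappa|\leq 2/r$; in particular $0\leq\kappa\leq1$ everywhere, so $\kappa^2\leq1$ pointwise on $\mathcal{M}^n$.

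First I would dominate the weighted integral by the full $L^2$-norm of $d(t)=\bar{g}(t)-\bar{g}_0$:
\[
\int_{\mathcal{M}}|d(t)|^{2}\kappa^{2}\,\mathrm{d}\mu \;\leq\; \int_{\mathcal{M}}|d(t)|^{2}\,\mathrm{d}\mu \;=\; \|\bar{g}(t)-\bar{g}_0\|_{L^2}^{2},
\]
which holds for every $t\in(0,T)$ since $\kappa^2\leq1$. Next I would invoke Lemma~\ref{lem3}, which supplies a single constant $C<\infty$, independent of $t$, satisfying $\|\bar{g}(t)-\bar{g}_0\|_{L^2}<C$ throughout $(0,T)$. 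Chaining the two estimates gives $\int_{\mathcal{M}}|d(t)|^{2}\kappa^{2}\,\mathrm{d}\mu < C^{2}$ for all $t$, and passing to the supremum over $t\in(0,T)$ yields
\[
\sup \int_{\mathcal{M}}|d(t)|^{2}\kappa^{2}\,\mathrm{d}\mu \;\leq\; C^{2} \;<\; \infty.
\]

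There is no genuine obstacle here; the calculation is essentially immediate once the cutoff is recognized as bounded by $1$. The only point demanding care is that the constant $C$ delivered by Lemma~\ref{lem3} be \emph{uniform} in $t$ — a single bound valid on the whole interval $(0,T)$ — rather than a $t$-dependent quantity that could degenerate as $t\to T$; otherwise the supremum might still diverge. Since Lemma~\ref{lem3} is already phrased with one constant $C$ holding for all $t\in(0,T)$, this uniformity is in hand, and the corollary follows at once.
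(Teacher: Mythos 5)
There is a genuine gap here, and it starts with a misreading of what the supremum in the corollary ranges over. In the paper the $\sup$ is taken over the spatial base point $x$ of the cutoff $\kappa$ (equivalently, over all balls $B(x,r)\subset\mathcal{M}$), not over $t\in(0,T)$. The point of the corollary is that the localized energy estimate of Lemma~\ref{lem3}, which is proved for one fixed ball at a time, can be made uniform over the non-compact manifold. The paper does this by covering each ball of radius $2r$ by $N$ balls of radius $r$ (with $N$ finite because $(\mathcal{M}^n,\bar{g}_0)$ is linearly nearly Euclidean), which converts the term $\int_0^t\int_{B(x,2r)}|d(s)|^2\,\mathrm{d}\mu\,\mathrm{d}s$ into $N\int_0^t\sup_y\int_{\mathcal{M}}|d(s)|^2\kappa_y^2\,\mathrm{d}\mu\,\mathrm{d}s$, and then applies the Gronwall inequality to the function $t\mapsto\sup_x\int_{\mathcal{M}}|d(t)|^2\kappa_x^2\,\mathrm{d}\mu$, obtaining a bound of the form $e^{Ct}\sup_x\int_{\mathcal{M}}|d(0)|^2\kappa_x^2\,\mathrm{d}\mu$. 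Both of these ingredients --- the covering number and Gronwall --- are absent from your argument.

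Your route also leans on the literal statement of Lemma~\ref{lem3}, namely a single $t$-independent constant $C$ with $\|\bar{g}(t)-\bar{g}_0\|_{L^2}<C$ on all of $(0,T)$. If that global, uniform-in-$t$ bound were already in hand, then $\kappa^2\leq 1$ would indeed make the corollary immediate --- but it would also make the corollary vacuous, which is a signal that something is off. What the proof of Lemma~\ref{lem3} actually delivers is the differential inequality
\[
\frac{\partial}{\partial t}\int_{\mathcal{M}}|d(t)|^{2}\kappa^{2}\,\mathrm{d}\mu \leq \Bigl(C(\bar{g}_0)+\tfrac{2}{C_1 r^2}\Bigr)\int_{B(x,2r)}|d(t)|^{2}\,\mathrm{d}\mu,
\]
whose time-integrated right-hand side is not yet known to be finite; closing that loop is exactly the job of the covering-plus-Gronwall step in Corollary~\ref{cor2}. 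So the uniformity you flag at the end of your proposal as ``in hand'' is in fact the thing still to be proved, and your argument, as written, assumes the conclusion. The resulting bound is also qualitatively different from what you claim: it grows like $e^{Ct}$ and therefore only gives short-time control, which is how the paper uses it.
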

\begin{proof}
	We obtain
	\[
	\begin{aligned}
	\sup \int_{\mathcal{M}}|d(t)|^{2} \kappa^{2} \mathrm{d} \mu \leq &\sup \int_{\mathcal{M}}|d(0)|^{2} \kappa^{2} \mathrm{d} \mu \\
	&+ N\left(C\left(\bar{g}_{0}\right)+\frac{2}{C_1 r^{2}}\right)\int_{0}^{t}\sup \int_{\mathcal{M}}|d(s)|^{2} \kappa^{2} \mathrm{d} \mu \mathrm{d} s,
	\end{aligned}
	\]
	where each ball of radius $2r$ on $\mathcal{M}$ can be covered by $N$ balls of radius $r$ because $(\mathcal{M}^n, \bar{g}_0)$ is linearly nearly Euclidean. By the Gronwall inequality, we have
	\[
	\sup \int_{\mathcal{M}}|d(t)|^{2} \kappa^{2} \mathrm{d} \mu \leq \exp \left(N\left(C\left(\bar{g}_{0}\right)+\frac{2}{C_1 r^{2}}\right)t\right)\sup \int_{\mathcal{M}}|d(0)|^{2} \kappa^{2} \mathrm{d} \mu.
	\] 
	
	For the $L^2$-Norm, the Ricci-DeTurck flow in linearly nearly Euclidean manifolds has a solution for a short time.
\end{proof}

\subsection{Long Time Stability in the $L^2$-Norm}

Before starting the discussion about long time stability of linearly nearly Euclidean metrics, we need some prior knowledge:

\begin{lemma}
	\label{lem4}
	Let $\bar{g}(t)$ be a Ricci–DeTurck flow on a maximal time interval $t \in (0,T)$ in an $L^2$ neighbourhood of $\bar{g}_0$.
	We have the following estimate such that:
	\[
	\left\|\frac{\partial}{\partial t} d_{0}(t)\right\|_{L^2} \leq C\left\|\nabla^{\bar{g}_{0}(t)}\left(d(t)-d_{0}\right)\right\|_{L^{2}}^{2}.
	\]
\end{lemma}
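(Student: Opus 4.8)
The plan is to exploit the fact---announced just before the lemma---that the reference curve $\bar{g}_0(t)$ is constructed as the $L^2$-orthogonal projection of the flow $\bar{g}(t)$ onto the finite-dimensional manifold $\tilde{\mathcal{F}}$ of stationary Ricci--DeTurck metrics near $\bar{g}_0$. Writing $h(t):=d(t)-d_0(t)=\bar{g}(t)-\bar{g}_0(t)$, the defining property of the projection is that $h(t)\perp\ker L_{\bar{g}_0(t)}$ in $L^2$, where $L_{\bar{g}_0(t)}$ is the Lichnerowicz operator of Definition~\ref{def3} and its $L^2$-kernel is the tangent space $T_{\bar{g}_0(t)}\tilde{\mathcal{F}}$, which is finite-dimensional by integrability and linear stability. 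Since $\partial_t\bar{g}_0(t)$ is tangent to $\tilde{\mathcal{F}}$, it lies in $\ker L_{\bar{g}_0(t)}$, so the quantity to be estimated, $\partial_t d_0(t)=\partial_t\bar{g}_0(t)$, is fixed by the projection $P(t)$ onto $\ker L_{\bar{g}_0(t)}$, i.e. $\partial_t d_0(t)=P(t)\,\partial_t d_0(t)$.

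First I would expand the flow about the stationary metric $\bar{g}_0(t)$. Because the Ricci--DeTurck operator vanishes on $\tilde{\mathcal{F}}$ and its linearization there is precisely the linear part $\Delta+\operatorname{Rm}*$ visible in Equation~(\ref{deturck2}), we may write $\partial_t\bar{g}(t)=L_{\bar{g}_0(t)}h+Q(h)$, where $Q(h)$ collects the quadratic terms of the schematic form $F*\nabla^{\bar{g}_0(t)}h*\nabla^{\bar{g}_0(t)}h+\nabla^{\bar{g}_0(t)}\!\big(G*h*\nabla^{\bar{g}_0(t)}h\big)$. Using $\bar{g}(t)=\bar{g}_0(t)+h$ I would then substitute $\partial_t d_0(t)=\partial_t\bar{g}(t)-\partial_t h=L_{\bar{g}_0(t)}h+Q(h)-\partial_t h$ and apply $P(t)$. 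Self-adjointness of $L_{\bar{g}_0(t)}$ kills the linear term, $P(t)L_{\bar{g}_0(t)}h=0$, since $\langle L_{\bar{g}_0(t)}h,\phi\rangle=\langle h,L_{\bar{g}_0(t)}\phi\rangle=0$ for every $\phi\in\ker L_{\bar{g}_0(t)}$. Hence
\[
\partial_t d_0(t)=P(t)\,Q(h)-P(t)\,\partial_t h .
\]

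The two remaining terms are handled separately. For $P(t)Q(h)$ I would fix an $L^2$-orthonormal basis $\{\phi_\alpha\}$ of $\ker L_{\bar{g}_0(t)}$ and estimate each pairing $\langle Q(h),\phi_\alpha\rangle_{L^2}$: integrating the divergence term by parts moves a derivative onto $\phi_\alpha$, leaving integrals of the form $\int|\nabla^{\bar{g}_0(t)}h|^2|\phi_\alpha|$ and $\int|h|\,|\nabla^{\bar{g}_0(t)}h|\,|\nabla^{\bar{g}_0(t)}\phi_\alpha|$. The first is bounded by $\|\phi_\alpha\|_{L^\infty}\|\nabla^{\bar{g}_0(t)}h\|_{L^2}^2$; for the second, the decay $|\nabla^{\bar{g}_0(t)}\phi_\alpha|=O(r^{-\tau-1})$ of the Jacobi fields on the linearly nearly Euclidean end (Definition~\ref{def2}), together with the Hardy inequality $\int|h|^2 r^{-2}\le C\int|\nabla^{\bar{g}_0(t)}h|^2$, converts it into $C\|\nabla^{\bar{g}_0(t)}h\|_{L^2}^2$ as well. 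This yields $\|P(t)Q(h)\|_{L^2}\le C\|\nabla^{\bar{g}_0(t)}h\|_{L^2}^2$. For the frame term, differentiating the constraint $P(t)h(t)=0$ gives $P(t)\partial_t h=-\dot{P}(t)h$, and standard perturbation theory---the kernel keeps constant finite dimension along $\tilde{\mathcal{F}}$ by integrability---bounds $\|\dot{P}(t)\|$ by the rate of change of $L_{\bar{g}_0(t)}$, hence by $\|\partial_t\bar{g}_0(t)\|_{L^2}$; the $L^\infty$-smallness of $h$ supplied by Lemma~\ref{lem2} then gives $\|P(t)\partial_t h\|_{L^2}\le C\|h\|_{L^\infty}\,\|\partial_t d_0(t)\|_{L^2}$, which for $\epsilon$ small can be absorbed into the left-hand side.

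Combining the two estimates gives $\|\partial_t d_0(t)\|_{L^2}\le C\|\nabla^{\bar{g}_0(t)}h\|_{L^2}^2+\frac{1}{2}\|\partial_t d_0(t)\|_{L^2}$, and absorbing the last term yields the claim with $h=d(t)-d_0(t)$. I expect the main obstacle to be the estimate on $\langle Q(h),\phi_\alpha\rangle$: one must use both the precise divergence structure of the Ricci--DeTurck nonlinearity in~(\ref{deturck2}) and the asymptotic decay of the kernel elements, since a naive bound only produces $\|h\|_{L^2}\|\nabla^{\bar{g}_0(t)}h\|_{L^2}$, which is too weak---upgrading it to $\|\nabla^{\bar{g}_0(t)}h\|_{L^2}^2$ is exactly where the weighted Hardy/interpolation inequality and the order-$\tau$ decay of Definition~\ref{def2} are essential.
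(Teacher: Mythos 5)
Your argument is essentially the one the paper itself invokes: the paper's proof of Lemma~\ref{lem4} is a one-line sketch deferring to \citep{deruelle2021stability}, built on exactly the ingredients you reconstruct --- the $L^2$-orthogonal projection onto the finite-dimensional kernel $\operatorname{ker}_{L^2}$ (with the orthonormal basis $\{e_i(t)\}$ whose time derivative depends linearly on $\partial_t d_0(t)$, which is your $\dot P(t)h$ absorption step), the vanishing of the linear term by self-adjointness, and the Hardy inequality of \citep{minerbe2009weighted} to upgrade the quadratic pairing to $\|\nabla^{\bar g_0(t)}(d(t)-d_0)\|_{L^2}^2$. Your write-up supplies more detail than the paper does, but the decomposition and the key estimates coincide with the cited route.
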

\begin{proof}
	Let $\{e_1(t),e_2(t),\ldots,e_n(t)\}$ be a family of $L^2$-orthonormal bases of the kernel $\operatorname{ker}_{L^2}$ such that $\frac{\partial}{\partial t}e_i(t)$ depends linearly on $\frac{\partial}{\partial t}d_0(t)$. For an isomorphism orthogonal projection $\Pi: T_{\bar{g}_0} \tilde{\mathcal{F}} \rightarrow \operatorname{ker}_{L^2}$, by the Hardy inequality~\citep{minerbe2009weighted}, one has similar proofs by referring the details~\citep{deruelle2021stability}.
\end{proof}

\begin{theorem}
	\label{thm5}
	Let $(\mathcal{M}^n, \bar{g}_0)$ be a linearly nearly Euclidean $n$-manifold which is linearly stable and integrable. Furthermore, there exists a constant $\alpha_{\bar{g}_0}$ such that
	\[
	\left(\Delta d(t)+\operatorname{Rm}(\bar{g}_0)*d(t), d(t)\right)_{L^{2}} \leq -\alpha_{\bar{g}_0}\left\|\nabla^{\bar{g}_0} h\right\|_{L^{2}}^{2}
	\]
	for all $\bar{g}(t) \in \tilde{\mathcal{F}}$ which is as in (\ref{f}).
\end{theorem}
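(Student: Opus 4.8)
The plan is to read the asserted inequality as a \emph{weak coercivity} of the Lichnerowicz-type operator $Lh:=\Delta h+\operatorname{Rm}(\bar g_0)*h$ on the $L^2$-orthogonal complement of its kernel, where $h$ denotes the transverse perturbation (the tensor written $d(t)$ on the left), and where $\ker_{L^2}(L)$ is identified with the tangent space $T_{\bar g_0}\tilde{\mathcal F}$ of the stationary manifold from~(\ref{f}). First I would integrate by parts to write $(Lh,h)_{L^2}=-\|\nabla^{\bar g_0}h\|_{L^2}^2+(\operatorname{Rm}(\bar g_0)*h,h)_{L^2}$, so that the claim is equivalent to the curvature bound $(\operatorname{Rm}(\bar g_0)*h,h)_{L^2}\le(1-\alpha_{\bar g_0})\|\nabla^{\bar g_0}h\|_{L^2}^2$ with a constant \emph{strictly} below $1$. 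Linear stability (Definition~\ref{def3}) already furnishes the nonstrict version $(Lh,h)_{L^2}\le 0$, i.e.\ the case $\alpha_{\bar g_0}=0$; the entire content of the theorem is to upgrade this to a strictly positive $\alpha_{\bar g_0}$.

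Because $(\mathcal M^n,\bar g_0)$ is non-compact and asymptotically flat, the top of the spectrum of $\Delta$ sits at $0$ and belongs to the essential spectrum, so no Poincar\'e-type gap in the plain $L^2$-norm can hold; this is precisely why the estimate must be weighted by $\|\nabla^{\bar g_0}h\|_{L^2}$ rather than $\|h\|_{L^2}$. I would therefore argue by contradiction and compactness. If no positive $\alpha_{\bar g_0}$ worked, there would be a sequence $h_k\in\ker_{L^2}(L)^\perp$ with $\|\nabla^{\bar g_0}h_k\|_{L^2}=1$ and $(Lh_k,h_k)_{L^2}\to 0$, equivalently $(\operatorname{Rm}(\bar g_0)*h_k,h_k)_{L^2}\to 1$. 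Using the decay $|\operatorname{Rm}(\bar g_0)|=O(\rho^{-\tau-2})$ coming from Definition~\ref{def2}, with $\rho\ge 1$ a weight comparable to the distance from the fixed point $x$, together with the Hardy inequality of~\citep{minerbe2009weighted} already invoked in Lemma~\ref{lem4}, the $h_k$ are uniformly bounded in the homogeneous Sobolev space $\dot H^1$ and in the weighted space $L^2(\rho^{-2}\,\mathrm d\mu)$.

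The heart of the argument is then that $\dot H^1$ embeds \emph{compactly} into $L^2\!\left(|\operatorname{Rm}(\bar g_0)|\,\mathrm d\mu\right)$, precisely because of the surplus decay $\rho^{-\tau}$ with $\tau>0$. Extracting a subsequence, I would pass to a weak $\dot H^1$-limit $h_\infty$ with $h_k\to h_\infty$ strongly in the curvature-weighted $L^2$; then $(\operatorname{Rm}(\bar g_0)*h_\infty,h_\infty)_{L^2}=1$ while $\|\nabla^{\bar g_0}h_\infty\|_{L^2}\le 1$ by weak lower semicontinuity, which forces $(Lh_\infty,h_\infty)_{L^2}\ge 0$. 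Linear stability then pins this to $(Lh_\infty,h_\infty)_{L^2}=0$, so $h_\infty\in\ker_{L^2}(L)$; since each $h_k$, and hence $h_\infty$, is $L^2$-orthogonal to the finite-dimensional kernel, we get $h_\infty=0$, contradicting $(\operatorname{Rm}(\bar g_0)*h_\infty,h_\infty)_{L^2}=1$. Here integrability (Definition~\ref{def4}) is what guarantees that $\ker_{L^2}(L)=T_{\bar g_0}\tilde{\mathcal F}$ is a genuine finite-dimensional space onto which the $L^2$-projection is well behaved, so that the splitting into kernel and complement is legitimate and is preserved under the weak limit. The constant $\alpha_{\bar g_0}$ is then the positive infimum excluded by the contradiction.

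The step I expect to be the main obstacle is the compactness of $\dot H^1\hookrightarrow L^2\!\left(|\operatorname{Rm}(\bar g_0)|\,\mathrm d\mu\right)$: on a non-compact manifold the ordinary Rellich theorem fails, and recovering compactness in the curvature-weighted norm requires combining the Hardy inequality with the strict decay $\tau>0$ to prevent the weighted $L^2$-mass of $h_k$ from escaping to infinity. Establishing this weighted Rellich property, and checking that orthogonality to $\ker_{L^2}(L)$ is stable under the limiting process, is where the analytic work concentrates; the integration by parts and the bookkeeping of the $\operatorname{Rm}*$ contraction are then routine.
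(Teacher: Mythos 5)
Your proposal is correct and follows essentially the same route the paper gestures at: the paper's proof simply defers to the strong-positivity estimate of \citep{devyver2014gaussian}, namely $\alpha_{\bar g_0}(-\Delta d,d)_{L^2}\le(-\Delta d-\operatorname{Rm}(\bar g_0)*d,d)_{L^2}$ deduced from linear stability, and your compactness-and-contradiction argument---Hardy inequality plus the decay of $\operatorname{Rm}(\bar g_0)$ giving a compact embedding of $\dot H^1$ into the curvature-weighted $L^2$ space, then weak lower semicontinuity and nonpositivity of $L_{\bar g_0}$ to force the limit into the kernel---is precisely how such an estimate is established in that reference. Your insistence that $h$ be $L^2$-orthogonal to $\ker_{L^2}(L_{\bar g_0})=T_{\bar g_0}\tilde{\mathcal F}$ is not a deviation but a necessary repair of the statement: as literally written the inequality fails on non-parallel kernel elements, and the paper implicitly assumes the same transversality when it applies the theorem in Corollary~\ref{cor3} to $d(t)-d_0$.
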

\begin{proof}
	The similar proofs can be found in \citep{devyver2014gaussian} with some minor modifications. Due to the linear stability requirement of linearly nearly Euclidean manifolds in Definition~\ref{def3}, $-L_{\bar{g}_0}$ is non-negative. Then there exists a positive constant $\alpha_{\bar{g}_0}$ such that
	\[
	\alpha_{\bar{g}_0}\left(-\Delta d(t), d(t)\right)_{L^{2}} \leq \left(-\Delta d(t)-\operatorname{Rm}(\bar{g}_0)*d(t), d(t)\right)_{L^{2}}.
	\]
	By Taylor expansion, one repeatedly uses elliptic regularity and Sobolev embedding~\citep{pacini2010desingularizing} to obtain the estimate.
\end{proof}

\begin{corollary}
	\label{cor3}
	Let $(\mathcal{M}^n, \bar{g}_0)$ be a linearly nearly Euclidean $n$-manifold which is integrable. For a Ricci–DeTurck flow $\bar{g}(t)$ on a maximal time interval $t \in [0, T]$, if it satisfies $\|\bar{g}(t)-\bar{g}_0 \|_{L^{\infty}} < \epsilon$ where $\epsilon > 0$, then there exists a constant $C < \infty$ for $t \in [0, T]$ such that the evolution inequality satisfies
	\[
	\|d(t) - d_{0}\|^2_{L^2} \geq C \int_{0}^{T}\left\|\nabla^{\bar{g}_{0}(t)}\left(d(t)-d_{0}\right)\right\|_{L^{2}}^{2} \mathrm{d} t.
	\]
\end{corollary}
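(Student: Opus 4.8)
The plan is to derive a differential inequality for the $L^2$-energy $\|d(t)-d_0\|_{L^2}^2$ and then integrate it over $[0,T]$. Writing $w(t):=d(t)-d_0(t)=\bar{g}(t)-\bar{g}_0(t)$ and using that the $L^2$ inner product is taken with respect to the \emph{fixed} reference metric $\bar{g}_0$ (so the volume measure does not evolve), I would first compute
\[
\frac{d}{dt}\|w(t)\|_{L^2}^2 = 2\left(\partial_t w,\, w\right)_{L^2} = 2\left(\partial_t d,\, w\right)_{L^2} - 2\left(\partial_t d_0,\, w\right)_{L^2}.
\]
Into the first term I would insert the evolution equation~(\ref{deturck2}). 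Its linear part is $\Delta d+\operatorname{Rm}*d$; since $\bar{g}_0(t)\in\tilde{\mathcal{F}}$ lies in the kernel of the linearized operator, the zero-mode $d_0$ contributes nothing, i.e. $(\Delta d_0+\operatorname{Rm}*d_0,\, w)_{L^2}=0$, so that $(\Delta d+\operatorname{Rm}*d,\, w)_{L^2}=(\Delta w+\operatorname{Rm}*w,\, w)_{L^2}$. Because $w$ is orthogonal to the tangent space $T_{\bar{g}_0}\tilde{\mathcal{F}}$ after the projection $\Pi$ of Lemma~\ref{lem4}, the coercivity estimate of Theorem~\ref{thm5} applies and bounds this by $-\alpha_{\bar{g}_0}\|\nabla^{\bar{g}_0}w\|_{L^2}^2$.

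Next I would dispose of the error terms. For the two quadratic pieces $F*\nabla d*\nabla d$ and $\nabla(G*d*\nabla d)$ I would integrate the second by parts and then bound both using the smallness $\|d\|_{L^\infty}<\epsilon$ together with the interior gradient bound of Lemma~\ref{lem2}, obtaining a contribution of size $C\epsilon\|\nabla^{\bar{g}_0}w\|_{L^2}^2$. For the projection term I would apply Lemma~\ref{lem4} and Cauchy--Schwarz, $|(\partial_t d_0,\,w)_{L^2}|\leq\|\partial_t d_0\|_{L^2}\|w\|_{L^2}\leq C\|\nabla^{\bar{g}_0}w\|_{L^2}^2\|w\|_{L^2}\leq C\epsilon\|\nabla^{\bar{g}_0}w\|_{L^2}^2$, where $\|w\|_{L^2}$ remains controlled by Lemma~\ref{lem3} and Corollary~\ref{cor2}. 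Combining everything, for $\epsilon$ chosen small enough,
\[
\frac{d}{dt}\|w(t)\|_{L^2}^2 \leq \left(-2\alpha_{\bar{g}_0}+C\epsilon\right)\|\nabla^{\bar{g}_0}w\|_{L^2}^2 \leq -\alpha_{\bar{g}_0}\|\nabla^{\bar{g}_0}w\|_{L^2}^2.
\]

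Finally I would integrate this over $[0,T]$ and discard the nonnegative terminal energy $\|w(T)\|_{L^2}^2$ to obtain $\|w(0)\|_{L^2}^2\geq\alpha_{\bar{g}_0}\int_0^T\|\nabla^{\bar{g}_0}w\|_{L^2}^2\,\mathrm{d}t$; since the energy is monotonically non-increasing, this yields the stated inequality with $C=\alpha_{\bar{g}_0}$.

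The step I expect to be the main obstacle is the coercivity estimate: one must decompose $w$ correctly into its $\operatorname{ker}_{L^2}$-component (tangent to $\tilde{\mathcal{F}}$) and its $L^2$-orthogonal complement, so that Theorem~\ref{thm5} is applied to the piece for which it is valid, and one must verify that the quadratic terms and the projection term $(\partial_t d_0,\,w)_{L^2}$ are genuinely of higher order and hence absorbable into the negative coercive term. This is precisely where the $L^\infty$-smallness and gradient decay of Lemma~\ref{lem2}, the uniform $L^2$-bounds of Lemma~\ref{lem3} and Corollary~\ref{cor2}, and the Hardy-type estimate of Lemma~\ref{lem4} are all indispensable; the delicate bookkeeping comes from the reference metric $\bar{g}_0(t)$ being the moving $\tilde{\mathcal{F}}$-projection while the $L^2$ structure is measured against the fixed $\bar{g}_0$.
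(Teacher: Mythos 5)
Your proposal follows essentially the same route as the paper's proof in Appendix~\ref{app4}: differentiate $\|d(t)-d_0\|_{L^2}^2$, apply the coercivity estimate of Theorem~\ref{thm5} to the linearized part, absorb the quadratic error terms and the $\partial_t d_0$ contribution via the $L^\infty$-smallness and Lemma~\ref{lem4}, and integrate the resulting differential inequality. The only differences are presentational (you justify passing to the equation for $d-d_0$ via a kernel argument, while the paper subtracts the two flow equations directly), so the proposal is correct and matches the paper's argument.
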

\begin{proof}
The proofs can be found in Appendix~\ref{app4}.
\end{proof}

\begin{theorem}
	\label{thm6}
	Let $(\mathcal{M}^n, \bar{g}_0)$ be a linearly nearly Euclidean $n$-manifold which is linearly stable and integrable. For every $ \epsilon_1 > 0$, there exists a $\epsilon_2 > 0$ satisfying: For any metric $\bar{g}(t) \in \mathcal{B}_{L^2}(\bar{g}_0, \epsilon_2)$, there is a complete Ricci–DeTurck flow $(\mathcal{M}^n, \bar{g}(t))$ starting from $\bar{g}(t)$ converging to a linearly nearly Euclidean metric
	$\bar{g}(\infty) \in \mathcal{B}_{L^2}(\bar{g}_0, \epsilon_1)$. Note that $\mathcal{B}_{L^2}(\bar{g}_0, \epsilon)$ is the $\epsilon$-ball with respect to the $L^2$-Norm induced by $\bar{g}_0$ and centred at $\bar{g}_0$.
\end{theorem}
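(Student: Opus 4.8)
The plan is to run a continuity argument that balances the coercivity furnished by linear stability (Theorem~\ref{thm5}) against the nonlinear error terms of the evolution equation~(\ref{deturck2}) and the drift of a moving reference footpoint, in the spirit of the ALE Ricci-flat analysis of~\citep{deruelle2021stability}. Fix $\epsilon_1 > 0$ and choose $\epsilon_2 \ll \epsilon_1$ to be specified later. Starting the Ricci--DeTurck flow from any $\bar{g}(0) \in \mathcal{B}_{L^2}(\bar{g}_0, \epsilon_2)$, Lemma~\ref{lem2} together with the short-time $L^2$ estimate of Corollary~\ref{cor2} produces a unique flow on a maximal interval $[0,T)$ that remains in a small $L^{\infty}$-ball and enjoys uniform derivative bounds for $t \geq 1$. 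Because the initial data is $L^{\infty}$-close to $\bar{g}_0$, Lemma~\ref{lem2} controls $\|\bar{g}(t) - \bar{g}_0\|_{L^{\infty}} < C\epsilon$ throughout. Using integrability (Definition~\ref{def4}), the stationary set $\tilde{\mathcal{F}}$ of~(\ref{f}) is a finite-dimensional manifold, so for each $t$ I would let $\bar{g}_0(t) \in \tilde{\mathcal{F}}$ be the nearest metric in $L^2$ and set $d_0(t) = \bar{g}_0(t) - \bar{g}_0$, arranging that the transverse part $d(t) - d_0(t) = \bar{g}(t) - \bar{g}_0(t)$ is $L^2$-orthogonal to $T_{\bar{g}_0(t)}\tilde{\mathcal{F}}$; existence and smoothness of this projection follow from the implicit function theorem applied to the $L^2$-distance on $\tilde{\mathcal{F}}$.

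Next I would derive the energy inequality for $\mathcal{E}(t) := \|d(t) - d_0(t)\|_{L^2}^2$. Differentiating in time and inserting~(\ref{deturck2}), the leading contribution is the quadratic form $2\bigl(\Delta(d-d_0) + \operatorname{Rm}(\bar{g}_0)*(d-d_0),\, d-d_0\bigr)_{L^2}$, which by Theorem~\ref{thm5} is bounded above by $-2\alpha_{\bar{g}_0}\|\nabla^{\bar{g}_0}(d-d_0)\|_{L^2}^2$. The genuinely nonlinear terms $F_{\bar{g}^{-1}}*\nabla d*\nabla d$ and $\nabla(G_{\Gamma} * d * \nabla d)$ are at least quadratic in the gradient and carry a factor of $\|d\|_{L^{\infty}} < C\epsilon$, so after integration by parts they are absorbed as $C\epsilon\,\|\nabla^{\bar{g}_0}(d-d_0)\|_{L^2}^2$; the footpoint-drift contribution, which involves $\tfrac{\partial}{\partial t}d_0$, is higher order by Lemma~\ref{lem4}. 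Consequently, for $\epsilon_2$ small enough, one obtains the monotonicity $\tfrac{d}{dt}\mathcal{E}(t) \leq -\alpha_{\bar{g}_0}\|\nabla^{\bar{g}_0}(d-d_0)\|_{L^2}^2 \leq 0$.

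This monotonicity keeps $\bar{g}(t)$ inside the prescribed $L^2$-ball, and since the uniform curvature bound from Lemma~\ref{lem2} rules out finite-time blow-up (by the obstruction behind Corollary~\ref{cor1} and Theorem~\ref{thm4}), I would conclude $T = \infty$. Integrating the energy inequality gives $\int_0^\infty \|\nabla^{\bar{g}_0}(d-d_0)\|_{L^2}^2\,dt \leq \alpha_{\bar{g}_0}^{-1}\mathcal{E}(0) < \infty$, and the complementary integral comparison of Corollary~\ref{cor3} then traps the transverse energy and forces $\|\bar{g}(t) - \bar{g}_0(t)\|_{L^2} \to 0$. Simultaneously, Lemma~\ref{lem4} yields $\int_0^\infty \|\tfrac{\partial}{\partial t}d_0\|_{L^2}\,dt \leq C\int_0^\infty \|\nabla^{\bar{g}_0}(d-d_0)\|_{L^2}^2\,dt < \infty$, so the footpoint $\bar{g}_0(t)$ is Cauchy in $L^2$ and converges to some $\bar{g}(\infty) \in \tilde{\mathcal{F}}$. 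Combining the two convergences shows $\bar{g}(t) \to \bar{g}(\infty)$ in $L^2$; since every element of $\tilde{\mathcal{F}}$ is a stationary linearly nearly Euclidean metric, $\bar{g}(\infty)$ is of the required form, and tracking the total displacement $\leq C\epsilon_2$ lets me choose $\epsilon_2$ so that $\bar{g}(\infty) \in \mathcal{B}_{L^2}(\bar{g}_0, \epsilon_1)$.

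The hard part will be the continuity/bootstrap step that keeps the smallness hypotheses valid on all of $[0,T)$: the absorption of the nonlinear terms is only legitimate while $\|d\|_{L^{\infty}} < C\epsilon$, so one must show that the $L^2$ decay of the transverse part, fed back through the smoothing estimate of Lemma~\ref{lem2}, \emph{improves} the $L^{\infty}$ bound rather than letting it deteriorate---a delicate interpolation between $L^2$ and $L^{\infty}$ on a non-compact manifold. Equally delicate is verifying that the footpoint drift never accumulates enough to push $\bar{g}(t)$ out of $\mathcal{B}_{L^2}(\bar{g}_0,\epsilon_1)$ before convergence is achieved, which is precisely where the integrability of $\bar{g}_0$ and the finite dimensionality of $\tilde{\mathcal{F}}$ are essential.
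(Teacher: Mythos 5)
Your proposal follows essentially the same route as the paper: the same decomposition of the perturbation into a drifting footpoint $d_0(t)$ on the stationary manifold $\tilde{\mathcal{F}}$ plus a transverse part $d(t)-d_0(t)$, the same use of Lemma~\ref{lem2} for smallness, Lemma~\ref{lem4} to dominate $\|\partial_t d_0\|_{L^2}$ by $\|\nabla^{\bar{g}_0}(d-d_0)\|_{L^2}^2$, and Corollary~\ref{cor3} (whose energy inequality you re-derive inline from Theorem~\ref{thm5}) to integrate the drift, control $\|d(T)\|_{L^2}$ by the triangle inequality, push $T$ to infinity, and extract convergence of both pieces. Your added remarks on the bootstrap needed to propagate the $L^\infty$ smallness and on the role of integrability are points the paper's own terse proof leaves implicit, but they do not constitute a different argument.
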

\begin{proof}
	By Lemma~\ref{lem2}, one can find so small $\epsilon_2>0$ such that $d(t) \in \mathcal{B}_{L^2}(0, \epsilon_2)$ holds. By Lemma~\ref{lem4} and Corollary~\ref{cor3}, we have
	\[
	\begin{aligned}
	&\left\|d_{0}(T)\right\|_{L^{2}} \leq C \int_{1}^{T}\left\|\frac{\partial}{\partial t} d_{0}(t)\right\|_{L^{2}} \mathrm{d} t \\
	&\quad \leq C \int_{1}^{T}\left\|\nabla^{\bar{g}_{0}}\left(d(t)-d_{0}(t)\right)\right\|_{L^{2}}^{2} \mathrm{d} t \\
	&\quad \leq C\left\|d(1)-d_{0}(1)\right\|_{L^{2}}^{2} \leq C\|d(1)\|_{L^{2}}^{2} \leq C \cdot\left(\epsilon_2\right)^{2}.
	\end{aligned}
	\]
	Furthermore, we obtain
	\[
	\left\|d(T)-d_0(T)\right\|_{L^{2}} \leq \|d(1)-d_0(1)\|_{L^{2}} \leq C \cdot \epsilon_2.
	\]
	By the triangle inequality, we get
	\[
	\left\|d(T)\right\|_{L^{2}} \leq C \cdot\left(\epsilon_2\right)^{2} + C \cdot \epsilon_2.
	\]
	Followed by Corollary~\ref{cor2} and Lemma~\ref{lem4}, such $T$ should be pushed further outward, because
	\[
	\lim_{t \rightarrow +\infty}\sup\left\|\frac{\partial}{\partial t} d_{0}(t)\right\|_{L^{2}} \leq \lim_{t \rightarrow +\infty}\sup\left\|\nabla^{\bar{g}_{0}}\left(d(t)-d_{0}(t)\right)\right\|_{L^{2}}^{2}=0.
	\]
	Thus, as $t$ approaches to $+\infty$, $\bar{g}(t)$ converges to $\bar{g}(\infty)=\bar{g}_0+d_0(\infty)$. By the Euclidean Sobolev inequality~\citep{minerbe2009weighted}, $d(t)-d_0(t)$ converges to $0$ as $t$ goes to $+\infty$,
	\[
	\lim_{t \rightarrow +\infty}\left\|d(t)-d_{0}(t)\right\|_{L^{2}} \leq \lim_{t \rightarrow +\infty}C\left\|\nabla^{\bar{g}_{0}}\left(d(t)-d_{0}(t)\right)\right\|_{L^{2}}=0.
	\]
	
	We now conclude a result for linearly nearly Euclidean manifolds under the Ricci-DeTurck flow, which ensures a infinite time existence.
\end{proof}

Therefore, we yield the stability of linearly nearly Euclidean manifolds under the Ricci-DeTurck flow, i.e., any Ricci-DeTurck flow starting close to the linearly nearly Euclidean metric will converge to the linearly nearly Euclidean metric.

\section{Gradient Flow of Linearly Nearly Euclidean Manifolds on Neural Networks}

Now, we have clarified the convergence of linearly nearly Euclidean manifolds under the Ricci-DeTurck flow. Furthermore, we will consider the solution of gradient flow on linearly nearly Euclidean manifolds for neural networks, which will allow us to observe the gradient flow behavior of neural network against perturbations. Empirically, we introduce information geometry~\citep{amari2000methods,amari2016information} and mirror descent algorithm~\citep{bubeck2015convex} to construct the gradient flow with the help of divergences.

\subsection{Linearly Nearly Euclidean Divergence}

From the perspective of information geometry and mirror descent algorithm, the metric $\bar{g}$ can be deduced by the divergence that needs to satisfy certain criteria~\citep{basseville2013divergence}. Consequently, we consider two nearby points $P$ and $Q$ in a manifold $\mathcal{M}$, where these two points are expressed in coordinates as $\boldsymbol{\xi}_P$ and $\boldsymbol{\xi}_Q$, where $\boldsymbol{\xi}$ is a column vector. Moreover, the divergence is defined as half the square of an infinitesimal distance between two sufficiently close points in Definition~\ref{def5}.

\begin{definition}
	\label{def5}
	$D[P:Q]$ is called a divergence when it satisfies the following criteria:
	
	(1) $D[P:Q] \geq 0$,
	(2) $D[P:Q]=0$ when and only when $P=Q$, 
	(3) When $P$ and $Q$ are sufficiently close, by denoting their coordinates by $\boldsymbol{\xi}_P$ and $\boldsymbol{\xi}_Q=\boldsymbol{\xi}_P+d\boldsymbol{\xi}$, the Taylor expansion of $D$ is written as
	\[
	D[\boldsymbol{\xi}_P:\boldsymbol{\xi}_P+d\boldsymbol{\xi}]=\frac{1}{2}\sum_{i,j} \bar{g}_{ij}(\boldsymbol{\xi}_P) d\xi_i d\xi_j + O(|d\boldsymbol{\xi}|^3),
	\]
	and metric $\bar{g}_{ij}$ is positive-definite, depending on $\boldsymbol{\xi}_P$.
\end{definition}

Based on our conjecture in introduction, by transferring the perturbation on the neural network to the neural manifold~\citep{martens2020new} (In this paper, we define as the linearly nearly Euclidean manifold), we can observe the performance of the linearly nearly Euclidean metric in response to the perturbation during the training. In order to construct a linearly nearly Euclidean manifold endowed with a linearly nearly Euclidean metric for the neural network, according to Definition~\ref{def5}, one can introduce the divergence to obtain the expression of the metric. And the advantage is that the constructed divergence can be used to calculate the gradient flow of neural networks on linearly nearly Euclidean manifolds. With the assist of Definition~\ref{def6}, we introduce a symmetrized convex function to construct the needed divergence:
\begin{equation}
\phi(\boldsymbol{\xi})=\sum_i \frac{1}{\tau^2} \log \frac{1}{2}\left(\exp(\tau\xi_i)+\exp(-\tau\xi_i)\right)=\sum_i \frac{1}{\tau^2}\log \left(\cosh(\tau\xi_i)\right)
\label{convex}
\end{equation}
where $\tau$ is a constant parameter.
\begin{definition}
	\label{def6}
	The Bregman divergence~\citep{bregman1967relaxation} $D_B[\boldsymbol{\xi}:\boldsymbol{\xi}']$ is defined as the difference between a convex function $\phi(\boldsymbol{\xi})$ and its tangent hyperplane $z=\phi(\boldsymbol{\xi}')+(\boldsymbol{\xi}-\boldsymbol{\xi}')\nabla\phi(\boldsymbol{\xi}')$, depending on the Taylor expansion at the point $\boldsymbol{\xi}'$:
	\[
	D_B[\boldsymbol{\xi}:\boldsymbol{\xi}']=\phi(\boldsymbol{\xi})-\phi(\boldsymbol{\xi}')-(\boldsymbol{\xi}-\boldsymbol{\xi}')\nabla\phi(\boldsymbol{\xi}').
	\]
\end{definition}

\begin{theorem}
	\label{thm7}
	For a convex function $\phi$ defined by Equation~(\ref{convex}), the linearly nearly Euclidean divergence between two points $\boldsymbol{\xi}$ and $\boldsymbol{\xi}'$ is
	\begin{equation}
	D_{LNE}[\boldsymbol{\xi}':\boldsymbol{\xi}] =\sum_i \left[\frac{1}{\tau^2}\log\frac{\cosh(\tau\xi'_i)}{\cosh(\tau\xi_i)} - \frac{1}{\tau}(\xi'_i-\xi_i)\tanh(\tau\xi_i)\right]
	\end{equation}
	where the Riemannian metric is
	\begin{equation}
	\begin{aligned}
	&\bar{g}_{ij}(\boldsymbol{\xi}(t)) = \delta_{ij} - \left[\tanh(\tau\boldsymbol{\xi})\tanh(\tau\boldsymbol{\xi})^\top\right]_{ij} \\
	&=\begin{bmatrix} 1-\tanh (\tau \xi_1(t))\tanh (\tau\xi_1(t))& \cdots &-\tanh (\tau\xi_1(t))\tanh (\tau \xi_n(t))\\ \vdots&\ddots&\vdots \\
	-\tanh (\tau\xi_n(t))\tanh (\tau \xi_1(t))&\cdots&1-\tanh (\tau \xi_n(t))\tanh (\tau\xi_n(t))
	\end{bmatrix}.
	\end{aligned}
	\end{equation}
\end{theorem}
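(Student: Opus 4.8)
The statement bundles two assertions: the closed form of the divergence $D_{LNE}$, and the form of the induced metric $\bar{g}_{ij}$. The plan is to obtain the first by direct substitution into the Bregman construction of Definition~\ref{def6}, and the second by feeding the resulting divergence into the second-order Taylor criterion of Definition~\ref{def5}, with one extra geometric ingredient needed to account for the off-diagonal terms.

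For the divergence, I would first compute the gradient of the potential~(\ref{convex}) componentwise. Since each summand depends on a single coordinate and $\frac{d}{dx}\log\cosh x=\tanh x$, one gets $\partial\phi/\partial\xi_i=\frac{1}{\tau}\tanh(\tau\xi_i)$, i.e. $\nabla\phi(\boldsymbol{\xi})=\frac{1}{\tau}\tanh(\tau\boldsymbol{\xi})$. Substituting $\phi$ and this gradient into $D_B[\boldsymbol{\xi}':\boldsymbol{\xi}]=\phi(\boldsymbol{\xi}')-\phi(\boldsymbol{\xi})-(\boldsymbol{\xi}'-\boldsymbol{\xi})\nabla\phi(\boldsymbol{\xi})$ and collecting the logarithms termwise via $\log\cosh(\tau\xi'_i)-\log\cosh(\tau\xi_i)=\log\frac{\cosh(\tau\xi'_i)}{\cosh(\tau\xi_i)}$ reproduces the claimed expression for $D_{LNE}$. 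This step is routine bookkeeping and carries no real obstruction; strict convexity of $\phi$ (which follows from $\frac{d^2}{dx^2}\log\cosh x=\mathrm{sech}^2 x>0$) simultaneously confirms criteria (1)--(2) of Definition~\ref{def5}.

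For the metric, I would put $\boldsymbol{\xi}'=\boldsymbol{\xi}+d\boldsymbol{\xi}$ and Taylor-expand $D_{LNE}$ to second order as prescribed by criterion~(3); the quadratic coefficient is the Hessian of $\phi$. Because $\phi$ separates over coordinates this Hessian is diagonal, with entries $\partial^2\phi/\partial\xi_i^2=\mathrm{sech}^2(\tau\xi_i)=1-\tanh^2(\tau\xi_i)$, which already matches the diagonal of the asserted matrix exactly. The off-diagonal block $-\tanh(\tau\xi_i)\tanh(\tau\xi_j)$, however, lies outside what a separable Hessian can produce, so an additional geometric input is required. To recover it I would invoke the linearized-gravity picture of the introduction and read $\bar{g}$ as the pullback of an ambient Minkowski metric under the graph embedding $\boldsymbol{\xi}\mapsto(\boldsymbol{\xi},t(\boldsymbol{\xi}))$ whose height coordinate is $t=\tau\phi$, so that $dt=\sum_i\tanh(\tau\xi_i)\,d\xi_i$. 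Pulling back $\sum_i d\xi_i\otimes d\xi_i-dt\otimes dt$ then gives precisely $\bar{g}_{ij}=\delta_{ij}-\tanh(\tau\xi_i)\tanh(\tau\xi_j)$, i.e. $\delta-\tanh(\tau\boldsymbol{\xi})\tanh(\tau\boldsymbol{\xi})^{\top}$.

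The main obstacle is therefore conceptual rather than computational: one must justify that the metric attached to $D_{LNE}$ is this Minkowski pullback rather than the bare Hessian of $\phi$ (the two agreeing on the diagonal but not off it), and then verify positive-definiteness. The latter I would settle by an eigenvalue argument, since $\delta-vv^{\top}$ with $v=\tanh(\tau\boldsymbol{\xi})$ has eigenvalue $1$ of multiplicity $n-1$ and a single nontrivial eigenvalue $1-\sum_i\tanh^2(\tau\xi_i)$; thus $\bar{g}$ is positive-definite exactly when $\sum_i\tanh^2(\tau\xi_i)<1$. This smallness condition is precisely what places $\gamma_{ij}=-\tanh(\tau\xi_i)\tanh(\tau\xi_j)$ in the near-Euclidean regime $g=\delta+\gamma$ of Definition~\ref{def2}, and it degenerates to $\gamma\to0$, $\bar{g}\to\delta$ as $\tau\to0$, which is the consistency check I would use to close the argument.
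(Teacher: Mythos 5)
Your first half is correct and is essentially the paper's own computation: with $\nabla\phi(\boldsymbol{\xi})=\frac{1}{\tau}\tanh(\tau\boldsymbol{\xi})$ the Bregman formula of Definition~\ref{def6} gives the stated $D_{LNE}$ after collecting logarithms, and strict convexity of $\log\cosh$ disposes of criteria (1)--(2). The decisive point, however, is your second observation: because $\phi$ is separable, the quadratic coefficient demanded by Definition~\ref{def5}(3) is the Hessian $\operatorname{diag}(\mathrm{sech}^2(\tau\xi_i))=\operatorname{diag}(1-\tanh^2(\tau\xi_i))$, which is diagonal and cannot produce the off-diagonal entries $-\tanh(\tau\xi_i)\tanh(\tau\xi_j)$. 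That diagnosis is right, and it exposes a defect in the theorem rather than in your computation: the paper's Appendix~\ref{app31} obtains the outer product only by applying the quotient rule to $\partial^2\log\cosh(\tau\boldsymbol{\xi})$ with the numerator term $\partial\cosh(\tau\boldsymbol{\xi})\,\partial\cosh(\tau\boldsymbol{\xi})^{\top}$ read as a full outer product while $\partial^{2}\cosh(\tau\boldsymbol{\xi})/\cosh(\tau\boldsymbol{\xi})$ is read componentwise; applied consistently to the separable sum, the cross terms cancel and one recovers exactly your diagonal matrix.

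Your proposed repair, though, does not close this gap. Definition~\ref{def5}(3) leaves no freedom in what the metric is: it is the second-order coefficient of the divergence itself, so declaring $\bar g$ to be the pullback of a Minkowski form under the graph embedding with height $\tau\phi$ -- which indeed equals $\delta-\tanh(\tau\boldsymbol{\xi})\tanh(\tau\boldsymbol{\xi})^{\top}$ -- proves a different statement, not Theorem~\ref{thm7}, since that tensor is not the one $D_{LNE}$ induces. Nor can the target matrix be rescued by a cleverer potential: for $k\neq i$ one has $\partial_k\bar g_{ii}=0$ but $\partial_i\bar g_{ik}=-\tau\tanh(\tau\xi_k)\,\mathrm{sech}^2(\tau\xi_i)\neq 0$, so $\delta-\tanh(\tau\boldsymbol{\xi})\tanh(\tau\boldsymbol{\xi})^{\top}$ violates the symmetry of third derivatives and is not the Hessian of any scalar function, hence is unreachable by any Bregman divergence through Definition~\ref{def5}. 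Your eigenvalue computation (simple eigenvalue $1-\sum_i\tanh^2(\tau\xi_i)$) is correct for the matrix as written, but note it forces the global constraint $\sum_i\tanh^2(\tau\xi_i)<1$ for positive-definiteness, which the theorem never assumes -- a further indication that it is the metric claim, not your Hessian, that needs amending.
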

\begin{proof}
	The proofs can be found in Appendix~\ref{app31}.
\end{proof}

Based on Theorem~\ref{thm7}, the form of the metrics $\bar{g}(t)$ constructed by the linearly nearly Euclidean divergence is consistent with the definition of linearly nearly Euclidean metrics, as long as we adjust parameter $\tau$ to satisfy Definition~\ref{def2}. Moreover, we have also proven that the linearly nearly Euclidean divergence satisfies the criteria of
divergence followed by Definition~\ref{def5}.

\subsection{Weak Approximation of the Gradient Flow}

By the linearly nearly Euclidean divergence, we consider the gradient flow for neural networks toward the steepest descent direction on the manifold endowed with linearly nearly Euclidean metrics:

\begin{lemma}
	\label{lem5}
	The steepest descent gradient flow measured by the linearly nearly Euclidean divergence is defined as
	\begin{equation}
	\tilde{\partial}_{\boldsymbol{\xi}} =\bar{g}^{-1}(t) \partial_{\boldsymbol{\xi}} =\left[\delta_{ij} - \tanh(\tau\boldsymbol{\xi}(t))\tanh(\tau\boldsymbol{\xi}(t))^\top\right]^{-1} \partial_{\boldsymbol{\xi}}.
	\end{equation}
\end{lemma}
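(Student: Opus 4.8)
The plan is to prove the lemma in two stages: first derive the general form of the steepest descent direction on a Riemannian manifold from the variational characterization of ``steepest descent'' combined with the second-order expansion of the divergence, and then substitute the explicit metric from Theorem~\ref{thm7}. The key conceptual point is that ``steepest'' must be measured with respect to the displacement the divergence assigns, not the Euclidean displacement, and it is this replacement that produces the factor $\bar{g}^{-1}$.

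First I would recall that for a loss $L(\boldsymbol{\xi})$, the steepest descent step $d\boldsymbol{\xi}$ is the one that decreases $L$ the most while holding the \emph{divergence} of the step fixed: one minimizes the linearized change $\partial_{\boldsymbol{\xi}} L \cdot d\boldsymbol{\xi}$ subject to $D_{LNE}[\boldsymbol{\xi}:\boldsymbol{\xi}+d\boldsymbol{\xi}] = \mathrm{const}$. By criterion (3) of Definition~\ref{def5}, this constraint is to leading order the quadratic form $\frac{1}{2}\, d\boldsymbol{\xi}^\top \bar{g}(\boldsymbol{\xi})\, d\boldsymbol{\xi}$, and the positive-definite Hessian appearing here is exactly the metric $\bar{g}$ computed in Theorem~\ref{thm7}. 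This identifies the proximity term of the mirror-descent update, built from the Bregman divergence of $\phi$ in Definition~\ref{def6}, with the Riemannian length element.

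Next, applying the Lagrange multiplier method gives the stationarity condition $\partial_{\boldsymbol{\xi}} L + \lambda\, \bar{g}(\boldsymbol{\xi})\, d\boldsymbol{\xi} = 0$, hence $d\boldsymbol{\xi} = -\lambda^{-1}\, \bar{g}^{-1}(\boldsymbol{\xi})\, \partial_{\boldsymbol{\xi}} L$; the multiplier $\lambda$ only rescales the step length, so the descent \emph{direction} is $\bar{g}^{-1}\partial_{\boldsymbol{\xi}}$. Passing to the continuous-time limit (equivalently, taking the small-step limit of the mirror-descent update that uses $D_{LNE}$ as its proximity term) yields the gradient flow $\tilde{\partial}_{\boldsymbol{\xi}} = \bar{g}^{-1}(t)\, \partial_{\boldsymbol{\xi}}$. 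Substituting $\bar{g}_{ij} = \delta_{ij} - [\tanh(\tau\boldsymbol{\xi})\tanh(\tau\boldsymbol{\xi})^\top]_{ij}$ from Theorem~\ref{thm7} then gives precisely the stated expression.

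The hard part will be justifying the invertibility of $\bar{g}$, on which the whole formula rests. Since $\bar{g} = I - vv^\top$ with $v = \tanh(\tau\boldsymbol{\xi})$ is a rank-one perturbation of the identity, its eigenvalues are $1$ with multiplicity $n-1$ and $1 - \|v\|^2$, so positive-definiteness, and hence the existence of $\bar{g}^{-1}$, requires $\sum_i \tanh^2(\tau\xi_i) < 1$. This is exactly the smallness regime secured by tuning $\tau$ to meet Definition~\ref{def2}, and within it the Sherman--Morrison identity supplies the closed form $\bar{g}^{-1} = I + \frac{\tanh(\tau\boldsymbol{\xi})\tanh(\tau\boldsymbol{\xi})^\top}{1 - \|\tanh(\tau\boldsymbol{\xi})\|^2}$, making the flow fully explicit and confirming that the operator $\tilde{\partial}_{\boldsymbol{\xi}}$ is well defined.
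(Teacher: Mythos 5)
Your proposal follows essentially the same route as the paper's proof in Appendix~C.2: minimize the linearized loss subject to a fixed linearly nearly Euclidean divergence, replace the divergence by its second-order Taylor expansion $\tfrac{1}{2}d\boldsymbol{\xi}^\top\bar{g}\,d\boldsymbol{\xi}$, and solve the Lagrangian stationarity condition to obtain $d\boldsymbol{\xi}\propto-\bar{g}^{-1}\partial_{\boldsymbol{\xi}}\mathbb{L}$ with the multiplier absorbed into the learning rate. Your closing observation on invertibility via the rank-one structure and the Sherman--Morrison formula goes beyond what the paper records here (it only returns to invertibility later, via diagonal dominance, in Corollary~\ref{cor4}), and is a worthwhile strengthening rather than a deviation.
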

\begin{proof}
	The proofs can be found in Appendix~\ref{app32}.
\end{proof}

However, Lemma~\ref{lem5} involves inversion, which greatly consumes computing resources. In particular, we propose two methods for approximating the gradient flow: weak approximation and strong approximation respectively.

For the weak approximation of the gradient flow, we put forward higher requirements for such metrics on the basis of Definition~\ref{def2}, which requires that the linearly nearly Euclidean metric is also a strictly diagonally-dominant matrix based on Corollary~\ref{cor4}.

\begin{corollary}
	\label{cor4}
	The weak approximation of the gradient flow measured by the linearly nearly Euclidean divergence is defined as
	\begin{equation}
	\tilde{\partial}_{\boldsymbol{\xi}} \approx \left[\delta_{ij} + \tanh(\tau\boldsymbol{\xi}(t))\tanh(\tau\boldsymbol{\xi}(t))^\top\right] \partial_{\boldsymbol{\xi}}
	\end{equation}
	if the metric satisfies strictly diagonally-dominant.
\end{corollary}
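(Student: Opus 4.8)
The plan is to treat Lemma~\ref{lem5} as the exact statement and derive the weak approximation by a truncated Neumann expansion of the inverse metric, with strict diagonal dominance supplying the convergence. Write $v := \tanh(\tau\boldsymbol{\xi}(t))$ for the column vector with entries $v_i = \tanh(\tau\xi_i(t))$, let $A := vv^\top$, and let $I$ denote the identity $\delta_{ij}$, so that Lemma~\ref{lem5} reads $\tilde{\partial}_{\boldsymbol{\xi}} = (I-A)^{-1}\partial_{\boldsymbol{\xi}}$. The goal is then to establish $(I-A)^{-1} \approx I + A$ under the diagonal-dominance hypothesis, which is exactly the claimed formula.

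First I would exploit the rank-one structure of $A$. Since $A = vv^\top$, one has $A^k = \|v\|_2^{2(k-1)}A$ for $k\geq 1$, so the Neumann series collapses to a geometric one, $(I-A)^{-1} = I + \sum_{k\geq 1}\|v\|_2^{2(k-1)}A = I + (1-\|v\|_2^2)^{-1}A$, valid precisely when $\|v\|_2^2 < 1$; this is just the Sherman--Morrison identity. The weak approximation $I + A$ is the first-order truncation of this series, discarding the factor $(1-\|v\|_2^2)^{-1}\approx 1$, and the discarded remainder equals $\frac{\|v\|_2^2}{1-\|v\|_2^2}A$, which is $O(\|v\|_2^4)$.

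Next I would verify that strict diagonal dominance of $\bar{g} = I - A$ furnishes exactly the bound $\|v\|_2^2 < 1$ required for convergence. The diagonal entry is $\bar g_{ii} = 1 - v_i^2$ and the off-diagonal entries are $\bar g_{ij} = -v_iv_j$, so strict dominance reads $1 - v_i^2 > |v_i|\sum_{j\neq i}|v_j|$ for every $i$; adding $v_i^2$ to both sides gives $|v_i|\,\|v\|_1 < 1$ for all $i$, whence $\|v\|_2^2 = \sum_i v_i^2 \leq \|v\|_\infty\|v\|_1 < 1$. This simultaneously certifies (via the Levy--Desplanques theorem) that $\bar g$ is invertible and that the geometric series converges.

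The main obstacle is the final, more qualitative step: arguing that the truncation error is genuinely negligible rather than merely finite. Diagonal dominance alone yields only $\|v\|_2^2 < 1$, so I would additionally invoke the linearly nearly Euclidean smallness of Definition~\ref{def2}: since $\gamma_{ij} = -v_iv_j$ is required to be small, $v$ itself is small, forcing $\|v\|_2^2 \ll 1$ and hence $(1-\|v\|_2^2)^{-1} = 1 + O(\|v\|_2^2)$. In this regime the discarded remainder $O(\|v\|_2^4)$ is of strictly lower order than the retained term $A = O(\|v\|_2^2)$, so $I + A$ is a consistent first-order approximation of $(I-A)^{-1}$, which completes the corollary.
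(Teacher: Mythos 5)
Your proposal is correct, and it reaches the same conclusion by a genuinely different route. The paper's own argument (Appendix~\ref{app33}) never inverts anything: it simply multiplies out $\left[\delta_{ij} - \tanh(\tau\boldsymbol{\xi})\tanh(\tau\boldsymbol{\xi})^\top\right]\left[\delta_{ij} + \tanh(\tau\boldsymbol{\xi})\tanh(\tau\boldsymbol{\xi})^\top\right]$, observes that the second-order off-diagonal contributions cancel exactly and that what remains differs from $\boldsymbol{I}$ only by fourth-order quantities $\sum O(\rho_a\rho_b\rho_c\rho_d)$, and discards those, citing strict diagonal dominance (via its nonsingularity consequence) to legitimize speaking of the inverse at all. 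You instead exploit the rank-one structure to get the \emph{exact} inverse $(I-vv^\top)^{-1} = I + (1-\|v\|_2^2)^{-1}vv^\top$ via Sherman--Morrison, identify $I+vv^\top$ as its truncation with an explicit remainder $\frac{\|v\|_2^2}{1-\|v\|_2^2}vv^\top$, and --- this is a genuine improvement over the paper --- actually \emph{derive} $\|v\|_2^2<1$ from the strict diagonal dominance hypothesis rather than merely invoking it. The two error analyses are at bottom the same estimate (the paper's product is $I-\|v\|_2^2 vv^\top$, so its discarded terms and your remainder are both fourth order in $v$), but your version quantifies the error exactly and makes the role of the dominance assumption precise, whereas the paper's direct-multiplication check is shorter and avoids any series or inversion formula. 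Your closing appeal to the smallness in Definition~\ref{def2} to argue $\|v\|_2^2\ll 1$ is a reasonable way to make the final ``$\approx$'' honest; the paper leaves that step equally informal.
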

\begin{proof}
	The proofs can be found in Appendix~\ref{app33}.
\end{proof}

{\bf Remark.} We apply the weakly approximated gradient flow of Corollary~\ref{cor4} for the training and observe the neural network behavior. This part of the experiment is in the Appendix~\ref{app5}.

\subsection{Strong Approximation with Neural Networks}

On the other hand, bypassing the requirement of weak approximation in Corollary~\ref{cor4}, our goal is to approximate the gradient flow, $\bar{g}^{-1}(t) \partial_{\boldsymbol{\xi}}$ in Lemma~\ref{lem5}, from the assist of multi-layer perceptron (MLP) neural network. Using the neural network, we can easily yield the strong approximation of the gradient flow because a neural network with a single hidden layer and a finite number of neurons can be used to approximate a continuous function on compact subsets~\citep{jejjala2020neural}, which is stated by the universal approximation theorem~\citep{cybenko1989approximation,hornik1991approximation}.

As an $n \times n$ symmetric matrix, $\bar{g}(t)$ can be decomposed in terms of the combination of entries $P$ and $A$, where $P$ is the entries made up of the elements of the lower triangular matrix that contains $n(n-1)/2$ real parameters and $A$ is the entries made up of the elements of the diagonal matrix that contains $n$ real parameters. During the training in Figure~\ref{approx}, $\tilde{g}(t)$ can be used as strong approximation of $\bar{g}^{-1}(t)$ in the gradient flow.

\begin{figure}[h]
	\centering
	\includegraphics[width=.9\textwidth]{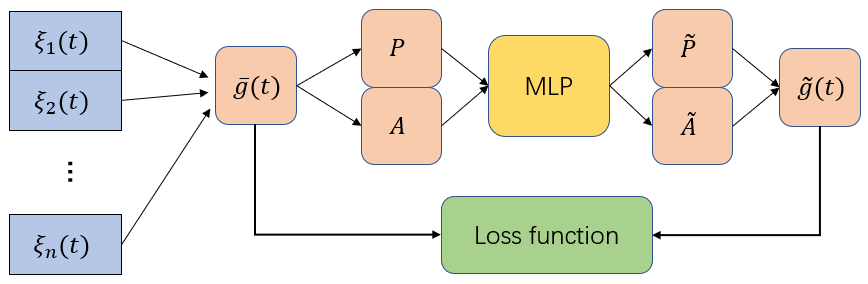}
	\caption{Flow chart of strong approximation of the gradient flow. The new entries $\tilde{P}$ and $\tilde{A}$ produced by neural network get combined into a new metric $\tilde{g}(t)$ that is used to minimize the loss function by combining with the metric $\bar{g}(t)$, where the loss function is defined by Equation~(\ref{loss}).}
	\label{approx}
\end{figure}

\begin{equation}
\mathbb{L}=\|\boldsymbol{I}-\bar{g}(t)\tilde{g}(t)\|^2.
\label{loss}
\end{equation}

\section{Conclusion}

In this paper, we analyze the evolution of linearly nearly Euclidean metrics under the Ricci-DeTurck flow, including proof of convergence in short and infinite time. Furthermore, we construct a linearly nearly Euclidean metric for the neural network with the assist of the information geometry and mirror descent algorithm. {\bf In view of the convergence and stability of linearly nearly Euclidean metrics against perturbations, we observe that the training of the neural network under the weakly approximated gradient flow is consistent with the evolution of the Ricci flow}. In terms of consistency, we hope that this paper will open an exciting future direction which use Ricci flow to assist neural network training in a manifold, similar to the natural gradient.

\acks{We thank a bunch of people and funding agency.}

\bibliography{yourbibfile}

\appendix

\section{Differential Geometry}
\label{app1}
1. Riemann curvature tensor (Rm) is a (1,3)-tensor defined for a 1-form $\omega$:
\[
R^l_{ijk}\omega_l=\nabla_i \nabla_j \omega_k - \nabla_j \nabla_i \omega_k
\]
where the covariant derivative of $F$ satisfies
\[
\nabla_{p} F_{i_{1} \ldots i_{k}}^{j_{1} \ldots j_{l}}=\partial_{p} F_{i_{1} \ldots i_{k}}^{j_{1} \ldots j_{l}}+\sum_{s=1}^{l} F_{i_{1} \ldots i_{k}}^{j_{1} \ldots q \ldots j_{l}} \Gamma_{p q}^{j_{s}}-\sum_{s=1}^{k} F_{i_{1} \ldots q_{\ldots} i_{k}}^{j_{1} \ldots j_{l}} \Gamma_{p i_{s}}^{q}.
\]
In particular, coordinate form of the Riemann curvature tensor is:
\[
R_{i j k}^{l}=\partial_{i} \Gamma_{j k}^{l}-\partial_{j} \Gamma_{i k}^{l}+\Gamma_{j k}^{p} \Gamma_{i p}^{l}-\Gamma_{i k}^{p} \Gamma_{j p}^{l}.
\]
2. Christoffel symbol in terms of an ordinary derivative operator is:
\[
\Gamma^k_{ij}=\frac{1}{2}g^{kl}(\partial_i g_{jl}+\partial_j g_{il}-\partial_l g_{ij}).
\]
3. Ricci curvature tensor (Ric) is a (0,2)-tensor:
\[
R_{ij}=R^p_{pij}.
\]
4. Scalar curvature is the trace of the Ricci curvature tensor:
\[
R=g^{ij}R_{ij}.
\]
5. Lie derivative of $F$ in the direction $\frac{d \varphi(t)}{dt}$:
\[
\mathcal{L}_{\frac{d \varphi(t)}{dt}} F=\left(\frac{d}{dt}\varphi^*(t) F\right)_{t=0}
\]
where $\varphi(t): \mathcal{M} \rightarrow \mathcal{M}$ for $t\in(-\epsilon,\epsilon)$ is a time-dependent diffeomorphism of $\mathcal{M}$ to $\mathcal{M}$.

\section{Proof for the Ricci Flow}
\label{app2}

\subsection{Proof for Lemma~\ref{le1}}
\label{app21}

\begin{definition}
	\label{linear}
	The linearization of the Ricci curvature tensor is given by
	\[
	D[\operatorname{Ric}](h)_{i j}=-\frac{1}{2} g^{p q}(\nabla_{p} \nabla_{q} h_{i j}+\nabla_{i} \nabla_{j} h_{p q}-\nabla_{q} \nabla_{i} h_{jp}-\nabla_{q} \nabla_{j} h_{i p}).
	\]
\end{definition}
\begin{proof}
	Based on Appendix~\ref{app1}, we have
	\[
	\nabla_{q} \nabla_{i} h_{j p} =\nabla_{i} \nabla_{q} h_{j p}-R_{q i j}^{r} h_{r p}-R_{q i p}^{r} h_{j m}.
	\]
	Combining with Definition~\ref{linear}, we can obtain the deformation equation because of $\nabla_k g_{ij}=0$,
	\[
	\begin{aligned}
	D[-2 \mathrm{Ric}](h)_{i j}=& g^{p q} \nabla_{p} \nabla_{q} h_{i j}+\nabla_{i}\left(\frac{1}{2} \nabla_{j} h_{p q}-\nabla_{q} h_{j p}\right)+\nabla_{j}\left(\frac{1}{2} \nabla_{i} h_{p q}-\nabla_{q} h_{i p}\right)+O(h_{ij}) \\
	=& g^{p q} \nabla_{p} \nabla_{q} h_{i j}+\nabla_{i} V_{j}+\nabla_{j} V_{i}+O(h_{ij}).
	\end{aligned}
	\]
\end{proof}

\subsection{Description of the DeTurck Trick}
\label{app22}

Using a time-dependent diffeomorphism $\varphi(t)$, we express the pullback metrics $\bar{g}(t)$:
\[
g(t)=\varphi^*(t) \bar{g}(t)
\]
is a solution of the Ricci flow. Based on the chain rule for the Lie derivative in Appendix~\ref{app1}, we can calculate
\[
\begin{aligned}
\frac{\partial}{\partial t} g(t) &=\frac{\partial\left(\varphi^{*}(t) \bar{g}(t)\right)}{\partial t} \\
&=\left(\frac{\partial\left(\varphi^{*}(t+\tau) \bar{g}(t+\tau)\right)}{\partial \tau}\right)_{\tau=0} \\
&=\left(\varphi^{*}(t) \frac{\partial \bar{g}(t+\tau)}{\partial \tau}\right)_{\tau=0}+\left(\frac{\partial\left(\varphi^{*}(t+\tau) \bar{g}(t)\right)}{\partial \tau}\right)_{\tau=0} \\
&=\varphi^{*}(t) \frac{\partial}{\partial t}\bar{g}(t)+\varphi^{*}(t) \mathcal{L}_{\frac{\partial \varphi(t)}{\partial t}} \bar{g}(t).
\end{aligned}
\]
With the help of Equation~(\ref{ricci}), for the reparameterized metric, we have
\[
\frac{\partial}{\partial t} g(t)=\varphi^{*}(t) \frac{\partial}{\partial t}\bar{g}(t)+\varphi^{*}(t) \mathcal{L}_{\frac{\partial \varphi(t)}{\partial t}} \bar{g}(t)=-2 \operatorname{Ric}(\varphi^*(t) \bar{g}(t))=-2 \varphi^*(t) \operatorname{Ric}(\bar{g}(t)).
\]
The diffeomorphism invariance of the Ricci curvature tensor is used in the last step. The above equation is equivalent to
\[
\frac{\partial}{\partial t}\bar{g}(t)=-2 \operatorname{Ric}(\bar{g}(t))- \mathcal{L}_{\frac{\partial \varphi(t)}{\partial t}} \bar{g}(t).
\]

\subsection{Proof for Theorem~\ref{thm3}}
\label{app23}

Considering any $x \in \mathcal{M}$, $t_0 \in [0, T)$, $V \in T_x \mathcal{M}$, we have
\[
\begin{aligned}
\left|\log \left(\frac{g_x (t_0)(V, V)}{g_x (0)(V, V)}\right)\right| &=\left|\int_{0}^{t_{0}} \frac{\partial}{\partial t}\left[\log g_x (t)(V, V)\right] d t\right| \\
&=\left|\int_{0}^{t_{0}} \frac{\frac{\partial}{\partial t} g_x (t)(V, V)}{g_x (t)(V, V)} d t\right| \\
& \leq \int_{0}^{t_{0}}\left|\frac{\partial}{\partial t} g_x (t) \left(\frac{V}{|V|_{g(t)}}, \frac{V}{|V|_{g(t)}}\right)\right| d t \\
& \leq \int_{0}^{t_{0}}\left|\frac{\partial}{\partial t} g_x (t)\right|_{g(t)} d t \\
& \leq C.
\end{aligned}
\]
By exponentiating both sides of the above inequality, we have
\[
e^{-C} g_x(0)(V, V) \leq g_x(t_0)(V, V) \leq e^C g_x(0)(V, V).
\]
This inequality can be rewritten as
\[
e^{-C} g_x(0) \leq g_x(t_0)(V, V) \leq e^C g_x(0)(V, V)
\]
because it holds for any $V$. Thus, the metrics $g(t)$ are uniformly equivalent to $g(0)$.

Now, we have the well-defined integral:
\[
g_x(T) - g_x(0) = \int_{0}^{T}\frac{\partial}{\partial t} g_x (t) d t.
\]
We say that this integral is well-defined because of two reasons. Firstly, as long as the metrics are smooth, the integral exists. Secondly, the integral is absolutely integrable. Based on the norm inequality induced by $g(0)$, one has
\[
|g_x(T) - g_x(t)|_{g(0)} \leq \int_{t}^{T}\left|\frac{\partial}{\partial t} g_x (t)\right|_{g(0)} d t.
\]
For each $x \in \mathcal{M}$, the above integral will approach to zero as $t$ approaches $T$. Because $\mathcal{M}$ is compact, the metrics $g(t)$ converge uniformly to a continuous metric $g(T)$ which is uniformly equivalent to $g(0)$ on $\mathcal{M}$. Moreover, we can show that
\[
e^{-C} g_x(0) \leq g_x(T) \leq e^C g_x(0).
\]

\section{Proof for the Information Geometry}
\label{app3}

\subsection{Proof for Theorem~\ref{thm7}}
\label{app31}

The linearly nearly Euclidean divergence can be defined between two nearby points $\boldsymbol{\xi}$ and $\boldsymbol{\xi}'$, where the first derivative of the linearly nearly Euclidean divergence w.r.t. $\boldsymbol{\xi}'$ is:
\[
\begin{aligned}
&\partial_{\boldsymbol{\xi}'} D_{LNE}[\boldsymbol{\xi}':\boldsymbol{\xi}] \\
&= \sum_i \left[\partial_{\boldsymbol{\xi}'} \frac{1}{\tau^2}\log\cosh(\tau\xi'_i)- \partial_{\boldsymbol{\xi}'} \frac{1}{\tau^2}\log\cosh(\tau\xi_i)- \frac{1}{\tau}\partial_{\boldsymbol{\xi}'}(\xi'_i-\xi_i)\tanh(\tau\xi_i)\right] \\
& = \sum_i \partial_{\boldsymbol{\xi}'} \frac{1}{\tau^2}\log\cosh(\tau\xi'_i) - \frac{1}{\tau}\tanh(\tau\boldsymbol{\xi}).
\end{aligned}
\]
The second derivative of the linearly nearly Euclidean divergence w.r.t. $\boldsymbol{\xi}'$ is:
\[
\partial^2_{\boldsymbol{\xi}'} D_{LNE}[\boldsymbol{\xi}':\boldsymbol{\xi}] =\sum_i \partial^2_{\boldsymbol{\xi}'} \frac{1}{\tau^2}\log\cosh(\tau\xi'_i).
\]
We deduce the Taylor expansion of the linearly nearly Euclidean divergence at $\boldsymbol{\xi}'=\boldsymbol{\xi}$:
\[
\begin{aligned}
D_{LNE}[\boldsymbol{\xi}':\boldsymbol{\xi}] &\approx D_{LNE}[\boldsymbol{\xi}:\boldsymbol{\xi}]+\left(\sum_i \partial_{\boldsymbol{\xi}'} \frac{1}{\tau^2}\log\cosh(\tau\xi'_i) - \frac{1}{\tau}\tanh(\tau\boldsymbol{\xi})\right)^\top \bigg|_{\boldsymbol{\xi}'=\boldsymbol{\xi}} d\boldsymbol{\xi} \\
&+\frac{1}{2}d\boldsymbol{\xi}^\top \left(\sum_i \partial^2_{\boldsymbol{\xi}'} \frac{1}{\tau^2}\log\cosh(\tau\xi'_i)\right) \bigg|_{\boldsymbol{\xi}'=\boldsymbol{\xi}} d\boldsymbol{\xi}\\
& =0 + 0 + \frac{1}{2\tau^2}d\boldsymbol{\xi}^\top \partial \left[ \frac{\partial\cosh(\tau\boldsymbol{\xi})}{\cosh(\tau\boldsymbol{\xi})}\right] d\boldsymbol{\xi} \\
&= \frac{1}{2\tau^2}d\boldsymbol{\xi}^\top  \frac{\partial^2\cosh(\tau\boldsymbol{\xi})\cosh(\tau\boldsymbol{\xi})-\partial\cosh(\tau\boldsymbol{\xi})\partial\cosh(\tau\boldsymbol{\xi})^\top}{\cosh^2(\tau\boldsymbol{\xi})} d\boldsymbol{\xi} \\
&= \frac{1}{2\tau^2}d\boldsymbol{\xi}^\top  \left(\frac{\partial^2\cosh(\tau\boldsymbol{\xi})}{\cosh(\tau\boldsymbol{\xi})} - \tau^2 \left[\frac{\sinh(\tau\boldsymbol{\xi})}{\cosh(\tau\boldsymbol{\xi})}\right]\left[\frac{\sinh(\tau\boldsymbol{\xi})}{\cosh(\tau\boldsymbol{\xi})}\right]^\top\right) d\boldsymbol{\xi} \\
&=\frac{1}{2}\sum_{i,j} \delta_{ij} -\left[\tanh(\tau\boldsymbol{\xi})\tanh(\tau\boldsymbol{\xi})^\top\right]_{ij} d\xi_i d\xi_j.
\end{aligned}
\]

\subsection{Proof for Lemma~\ref{lem5}}
\label{app32}

We would to know in which direction minimizes the loss function with the constraint of the linearly nearly Euclidean divergence, so that we do the minimization:
\[
d\boldsymbol{\xi}^{*}=\underset{d\boldsymbol{\xi} \text { s.t. } D_{LNE}[\boldsymbol{\xi}:\boldsymbol{\xi}+d\boldsymbol{\xi}]=c}{\arg \min } \mathbb{L}(\boldsymbol{\xi}+d\boldsymbol{\xi})
\]
where $c$ is the constant. The loss function descends along the manifold with constant speed, regardless the curvature.

Now, we write the minimization in Lagrangian form. Combined with Theorem~\ref{thm7}, the linearly nearly Euclidean divergence can be approximated by its second order Taylor expansion. Approximating $\mathbb{L}(\boldsymbol{\xi}+d\boldsymbol{\xi})$ with it first order Taylor expansion, we get:
\[
\begin{aligned}
d\boldsymbol{\xi}^* &=\underset{d\boldsymbol{\xi}}{\arg \min}\ \mathbb{L}(\boldsymbol{\xi}+d\boldsymbol{\xi}) + \lambda\left(D_{LNE}[\boldsymbol{\xi}:\boldsymbol{\xi}+d\boldsymbol{\xi}]-c\right) \\
&\approx \underset{d\boldsymbol{\xi}}{\arg \min}\ \mathbb{L}(\boldsymbol{\xi}) + \partial_{\boldsymbol{\xi}}\mathbb{L}(\boldsymbol{\xi})^\top d\boldsymbol{\xi}+\frac{\lambda}{2}d\boldsymbol{\xi}^\top \bar{g}_{ij}(t) d\boldsymbol{\xi}-c\lambda.
\end{aligned}
\]
To solve this minimization, we set its derivative w.r.t. $d \boldsymbol{\xi}$ to zero:
\[
\begin{aligned}
0 &=\frac{\partial}{\partial d\boldsymbol{\xi}} \mathbb{L}(\boldsymbol{\xi})+\partial_{\boldsymbol{\xi}} \mathbb{L}(\boldsymbol{\xi})^\top d\boldsymbol{\xi}+\frac{\lambda}{2} d\boldsymbol{\xi}^\top \left[\delta_{ij} - \tanh(\tau\boldsymbol{\xi}(t))\tanh(\tau\boldsymbol{\xi}(t))^\top\right] d\boldsymbol{\xi}-c\lambda \\
&=\partial_{\boldsymbol{\xi}} \mathbb{L}(\boldsymbol{\xi})+\lambda \left[\delta_{ij} - \tanh(\tau\boldsymbol{\xi}(t))\tanh(\tau\boldsymbol{\xi}(t))^\top\right] d\boldsymbol{\xi} \\
d\boldsymbol{\xi} &=-\frac{1}{\lambda} \left[\delta_{ij} - \tanh(\tau\boldsymbol{\xi}(t))\tanh(\tau\boldsymbol{\xi}(t))^\top\right]^{-1} \partial_{\boldsymbol{\xi}} \mathbb{L}(\boldsymbol{\xi})
\end{aligned}
\]
where a constant factor $1/\lambda$ can be absorbed into learning rate. Up to now, we get the optimal descent direction, i.e., the opposite direction of gradient which takes into account the local curvature defined by $\left[\delta_{ij} - \tanh(\tau\boldsymbol{\xi}(t))\tanh(\tau\boldsymbol{\xi}(t))^\top\right]^{-1}$.

\subsection{Proof for Corollary~\ref{cor4}}
\label{app33}

\begin{definition}
	\label{strict}
	For $\boldsymbol{A} \in \mathcal{R}^{n \times n}$, $\boldsymbol{A}$ is called as the strictly diagonally-dominant matrix when it satisfies
	\[
	\big|a_{ii}\big| > \sum^n_{j=1,j\neq i} \big|a_{ij}\big|,\;\;\; i=1,2,\ldots,n.
	\]
\end{definition}

\begin{definition}
	\label{nonsingular}
	If $\boldsymbol{A} \in \mathcal{R}^{n \times n}$ is a strictly diagonally-dominant matrix, then $\boldsymbol{A}$ is a nonsingular matrix together.
\end{definition}

Subsequently, we can consider the inverse matrix of the metric $\bar{g}(t)$. Due to the strictly diagonally-dominant feature in Definition~\ref{strict} and Definition~\ref{nonsingular}, we can approximate $\left[\delta_{ij} - \tanh(\tau\boldsymbol{\xi}(t))\tanh(\tau\boldsymbol{\xi}(t))^\top\right]^{-1}$. As for we can also ignore the fourth-order small quantity $\sum O(\rho_a \rho_b \rho_c \rho_d)$ because of the characteristic of the strictly diagonally-dominant, so that
\[
\begin{aligned}
&\left[\delta_{ij} - \tanh(\tau\boldsymbol{\xi}(t))\tanh(\tau\boldsymbol{\xi}(t))^\top\right]\left[\delta_{ij} + \tanh(\tau\boldsymbol{\xi}(t))\tanh(\tau\boldsymbol{\xi}(t))^\top\right] \\
&= \begin{bmatrix} 1-\rho_1 \rho_1 & -\rho_1 \rho_2 &\cdots\\ -\rho_2 \rho_1 & 1-\rho_2 \rho_2 &\cdots \\
\vdots&\vdots&\ddots
\end{bmatrix} \begin{bmatrix} 1+\rho_1 \rho_1 & \rho_1 \rho_2 &\cdots\\ \rho_2 \rho_1 & 1+\rho_2 \rho_2 &\cdots \\
\vdots&\vdots&\ddots
\end{bmatrix} \\
& = \begin{bmatrix} 1-\sum O(\rho_a \rho_b \rho_c \rho_d)& \rho_1 \rho_2-\rho_1 \rho_2 -\sum O(\rho_a \rho_b \rho_c \rho_d) &\cdots\\ -\rho_2 \rho_1 + \rho_2 \rho_1 -\sum O(\rho_a \rho_b \rho_c \rho_d) & 1-\sum O(\rho_a \rho_b \rho_c \rho_d) &\cdots \\
\vdots&\vdots&\ddots
\end{bmatrix} \approx \boldsymbol{I}.
\end{aligned}
\]
Note that the Euclidean metric $\delta_{ij}$ is equal to the identity matrix $\boldsymbol{I}$.

\section{Proof for Corollary~\ref{cor3}}
\label{app4}

	Based on Equation~(\ref{deturck2}), we know
\[
\begin{aligned}
\frac{\partial}{\partial t} (d(t)-d_0)=&\Delta (d(t)-d_0)+\operatorname{Rm}*(d(t)-d_0) \\
&+F_{\bar{g}^{-1}} * \nabla^{\bar{g}_0} (d(t)-d_0) * \nabla^{\bar{g}_0} (d(t)-d_0) \\ &+\nabla^{\bar{g}_0}\left(G_{\Gamma(\bar{g}_0)} * (d(t)-d_0) * \nabla^{\bar{g}_0} (d(t)-d_0)\right).
\end{aligned}
\]
Followed by Lemma~\ref{lem4} and Theorem~\ref{thm5}, we further obtain
\[
\begin{aligned}
\frac{\partial}{\partial t} \| d(t) -d_{0} \|_{L^{2}}^{2}=& 2\left(\Delta (d(t)-d_0)+\operatorname{Rm}*(d(t)-d_0), d(t)-d_{0}\right)_{L^{2}} \\
&+\left(F_{\bar{g}^{-1}} * \nabla^{\bar{g}_0} (d(t)-d_0) * \nabla^{\bar{g}_0} (d(t)-d_0), d(t)-d_{0}\right)_{L^{2}} \\
&+\left(\nabla^{\bar{g}_0}\left(G_{\Gamma(\bar{g}_0)} * (d(t)-d_0) * \nabla^{\bar{g}_0} (d(t)-d_0)\right), d(t)-d_{0}\right)_{L^{2}} \\
&+\left(d(t)-d_{0}, \frac{\partial}{\partial t} d_{0}(t)\right)_{L^{2}}+\int_{\mathcal{M}}\left(d(t)-d_{0}\right) *\left(d(t)-d_{0}\right) * \frac{\partial}{\partial t} d_{0}(t) \mathrm{d} \mu \\
\leq &-2 \alpha_{\bar{g}_0}\left\|\nabla^{\bar{g}_0}\left(d(t)-d_{0}\right)\right\|_{L^{2}}^{2} \\
&+C\left\|\left(d(t)-d_{0}\right)\right\|_{L^{\infty}}\left\|\nabla^{\bar{g}_{0}}\left(d(t)-d_{0}\right)\right\|_{L^{2}}^{2} \\
&+\left\|\frac{\partial}{\partial t} d_{0}(t)\right\|_{L^{2}}\left\|d(t)-d_{0}\right\|_{L^{2}} \\
\leq &\left(-2 \alpha_{\bar{g}_{0}}+C \cdot \epsilon\right)\left\|\nabla^{\bar{g}_{0}}\left(d(t)-d_{0}\right)\right\|_{L^{2}}^{2}.
\end{aligned}
\]
Let $\epsilon$ be small enough that $-2 \alpha_{\bar{g}_{0}}+C \cdot \epsilon < 0$ holds, we can find
\[
\frac{\partial}{\partial t} \| d(t) -d_{0} \|_{L^{2}}^{2} \leq -C \left\|\nabla^{\bar{g}_{0}}\left(d(t)-d_{0}\right)\right\|_{L^{2}}^{2}
\]
holds.

\section{Experiment}
\label{app5}

\begin{figure}[thbp]
	\centering
	\subfigure[CIFAR10]
	{\includegraphics[width=0.85\textwidth]{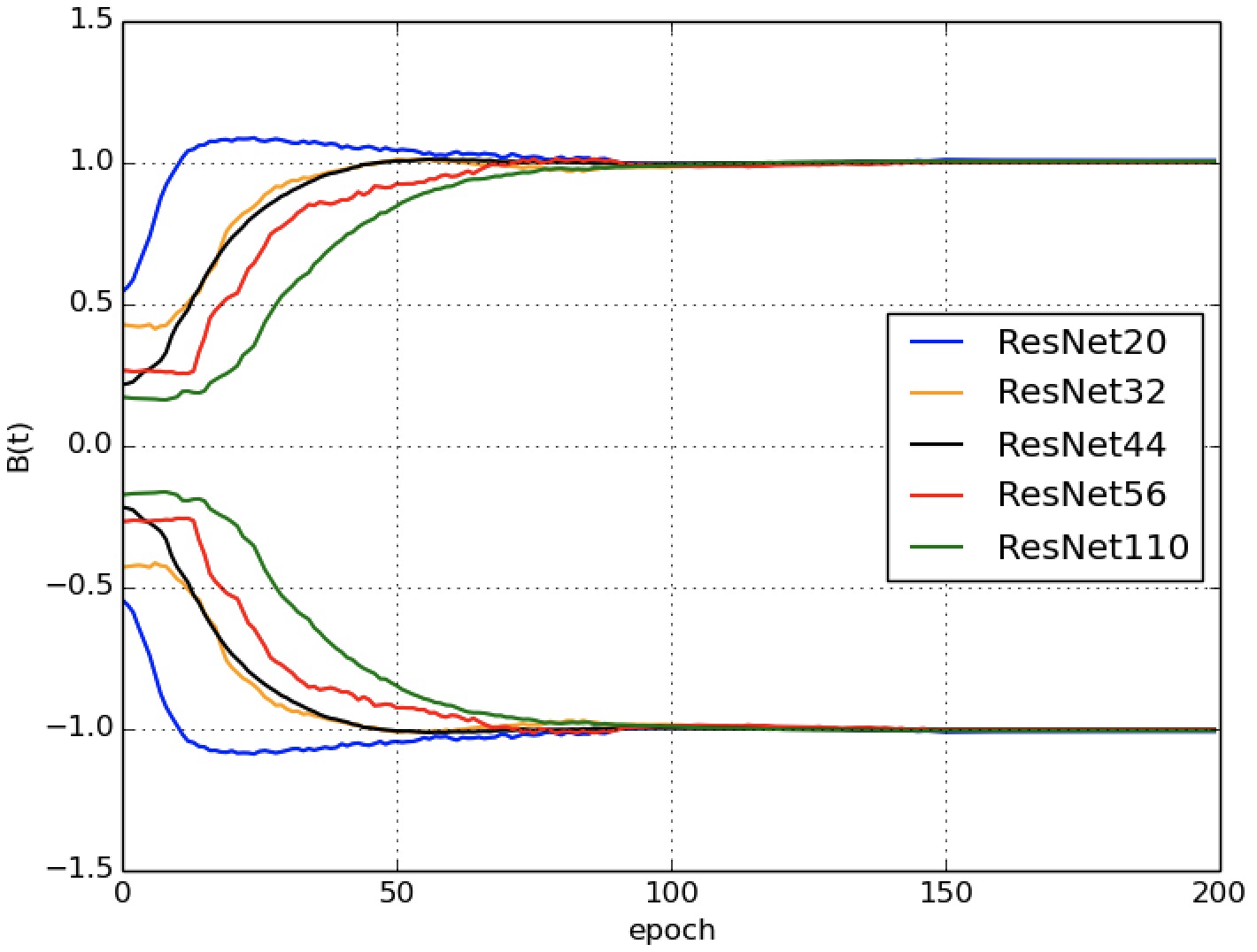}}
	\subfigure[CIFAR100]
	{\includegraphics[width=0.85\textwidth]{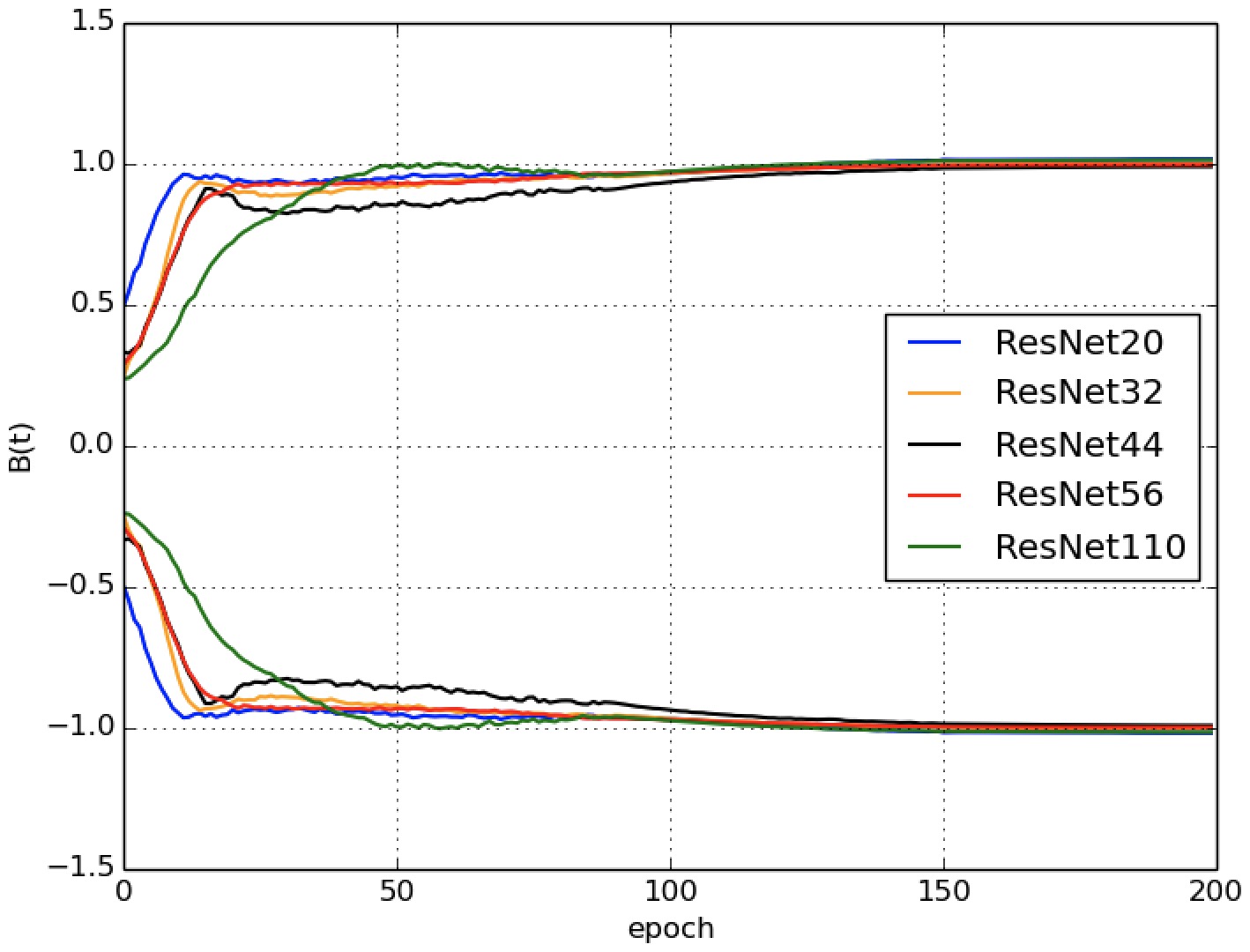}}
	\caption{The evolution of metrics $\bar{g}(t)$ by the radius of a ball with the epoch of training process. Note that we use radius $1$ as the calibration of a linearly nearly Euclidean metric.}
	\label{curvature}
\end{figure}

{\bf CIFAR datasets.} The two CIFAR datasets \cite{krizhevsky2009learning} consist of natural color images with 32$\times$32 pixels, respectively 50,000 training and 10,000 test images, and we hold out 5,000 training images as a validation set from the training set. CIFAR10 consists of images organized into 10 classes and CIFAR100 into 100 classes. We adopt a standard data augmentation scheme (random corner cropping and random flipping) that is widely used for these two datasets. We normalize the images using the channel means and standard deviations in preprocessing.

{\bf Settings.} We set total training epochs as 200 where the learning strategy is lowered by 10 times at epoch 80, 150, and 190, with the initial 0.1. The learning strategy is a weight decay of 0.0001, a batch size of 128, SGD optimization. On CIFAR10 and CIFAR100 datasets, we apply ResNet20, ResNet32, ResNet44, ResNet56 and ResNet110 models~\citep{he2016deep} to observe the evolution of neural manifold, i.e., the convergence of metrics depended on time. As far as we define the metric $\bar{g}(t)$, we can use the length $|ds^2|=\sqrt{\sum_{i,j} \bar{g}_{ij}(t) d\xi_i d\xi_j}$ to intuitively reflect the change of metrics. Specifically, we define a ball whose radius is equal to $|ds^2|$:
\begin{equation}
B_r(t):=\left\{r=\sqrt{\sum_{i,j} \bar{g}_{ij}(t) d\xi_i d\xi_j}\right\}.
\end{equation}

{\bf Details.} We embed the linearly nearly Euclidean manifold into a neural network, which means that a neural networks uses Corollary~\ref{cor4} for back-propagation. No other parts of the neural network need to be modified.

{\bf Neural Network Behavior.} By observing the change of the ball in Figure~\ref{curvature}, we can know the change of the metric. Through simple observation, metrics $\bar{g}(t)$ on CIFAR10 converge in about 100 epochs and metrics $\bar{g}(t)$ on CIFAR100 converge in about 150 epochs. For CIFAR10, metrics $\bar{g}(t)$ in ResNet32 and ResNet44 seem to converge the fastest. For CIFAR100, metrics $\bar{g}(t)$ in ResNet110 seem to converge the fastest. In general, experiments show that all metrics in neural manifolds converge to a linearly nearly Euclidean metric. It is consistent with the evolution of Ricci-DeTurck flow in Section~\ref{chapter3}.

{\bf Remark.} For a neural network that is specified by connection weights, the set of all such weighs forms a manifold. On linearly nearly Euclidean manifolds, when we use the weakly approximated gradient flow to learn a neural network ($\boldsymbol{\xi}(t)$ is composed of weights), we observe the evolution of its metric is consistent with the convergent behavior under Ricci-DeTurck flow. Consequently, the training of a neural manifold is gradually regular, whose metric is stable and eventually converges to the linearly nearly Euclidean metric.

\end{document}